\documentclass{article} 
\usepackage{iclr2020_conference,times}

\iclrfinalcopy
\usepackage{hyperref}
\usepackage{url}

\usepackage{smile}
\usepackage{enumitem}
\usepackage{wrapfig,lipsum}

\def \spiderAlg{\text{SRVR-PG}}
\def \Oind{s}
\def \OindLen{S}
\def \Iind{t}
\def \IindLen{m}
\def \tbtheta{\tilde{\btheta}}
\def \dd {\text{d}}
\def \CC{\textcolor{black}}
\newcommand{\la}{\langle}
\newcommand{\ra}{\rangle}
\def \Var {\text{Var}}

\def \consLip{G}
\def \consHes{M}
\def \consVarBound{\xi}
\def \consVarWeight{W}
\def \consGrad{C_g}
\def \consGradLip{L}

\def \bbrho{\bm{\rho}}
\def \prox{\text{prox}}
\def\isg{g_{\omega}}
\def \constrainset{\bTheta}

\allowdisplaybreaks

\title{Sample Efficient Policy Gradient Methods with Recursive Variance Reduction}


\author{Pan Xu, Felicia Gao, Quanquan Gu 
\\
Department of Computer Science\\
University of California, Los Angeles\\
Los Angeles, CA 90094, USA \\
\texttt{panxu@cs.ucla.edu, fxgao1160@engineering.ucla.edu, qgu@cs.ucla.edu}
}

%


\begin{document}
\maketitle

\begin{abstract}
Improving the sample efficiency in reinforcement learning has been a long-standing research problem. In this work, we aim to reduce the sample complexity of existing policy gradient methods. We propose a novel policy gradient algorithm called SRVR-PG, which only requires $O(1/\epsilon^{3/2})$\footnote{$O(\cdot)$ notation hides constant factors.} episodes to find an $\epsilon$-approximate stationary point of the nonconcave performance function $J(\boldsymbol{\theta})$ (i.e., $\boldsymbol{\theta}$ such that $\|\nabla J(\boldsymbol{\theta})\|_2^2\leq\epsilon$). This sample complexity improves the existing result $O(1/\epsilon^{5/3})$ for stochastic variance reduced policy gradient algorithms by a factor of $O(1/\epsilon^{1/6})$. In addition, we also propose a variant of SRVR-PG with parameter exploration, which explores the initial policy parameter from a prior probability distribution. We conduct numerical experiments on classic control problems in reinforcement learning to validate the performance of our proposed algorithms.
\end{abstract}

\section{Introduction}\label{sec:intro}
Reinforcement learning (RL) \citep{sutton2018reinforcement} has received significant success in solving various complex problems such as learning robotic motion skills \citep{levine2015learning}, autonomous driving \citep{shalev2016auto-driving} and Go game \citep{silver2017mastering}, where the agent progressively interacts with the environment in order to learn a good policy to solve the task.  In RL, the agent makes its decision by choosing the action based on the current state and the historical rewards it has received so far. After performing the chosen action, the agent's state will change according to some transition probability model and a new reward would be revealed to the agent by the environment based on the action and new state. Then the agent continues to choose the next action until it reaches a terminal state. The aim of the agent is to maximize its expected cumulative rewards. Therefore, the pivotal problem in RL is to find a good policy which is a function that maps the state space to the action space and thus informs the agent which action to take at each state. To optimize the agent's policy in the high dimensional continuous action space, the most popular approach is the policy gradient  method \citep{sutton2000policy} that parameterizes the policy by an unknown parameter $\btheta\in\RR^d$ and directly optimizes the policy by finding the optimal $\btheta$. The objective function $J(\btheta)$ is chosen to be the performance function, which is the expected return under a specific policy and is usually non-concave. Our goal is to maximize the value of $J(\btheta)$ by finding a stationary point $\btheta^*$ such that $\|\nabla J(\btheta^*)\|_2=0$ using gradient based algorithms. 

Due to the expectation in the definition of $J(\btheta)$, it is usually infeasible to compute the gradient exactly. In practice, one often uses stochastic gradient estimators such as REINFORCE \citep{williams1992simple}, PGT \citep{sutton2000policy} and GPOMDP \citep{baxter2001infinite} to approximate the gradient of the expected return based on a batch of sampled trajectories. However, this approximation will introduce additional variance and slow down the convergence of policy gradient, which thus requires a huge amount of trajectories to find a good policy. Theoretically, these stochastic gradient (SG) based algorithms require $O(1/\epsilon^2)$ trajectories \citep{robbins1951stochastic} to find an $\epsilon$-approximate stationary point such that $\EE[\|\nabla J(\btheta)\|_2^2]\leq\epsilon$. In order to reduce the variance of policy gradient algorithms, \citet{papini2018stochastic} proposed a stochastic variance-reduced policy gradient (SVRPG) algorithm by borrowing the idea from the stochastic variance reduced gradient (SVRG) \citep{johnson2013accelerating,allen2016variance,reddi2016stochastic} in stochastic optimization. The key idea is to use a so-called semi-stochastic gradient to replace the stochastic gradient used in SG methods. The semi-stochastic gradient combines the stochastic gradient in the current iterate with a snapshot of stochastic gradient stored in an early iterate which is called a reference iterate. In practice, SVRPG saves computation on trajectories and improves the performance of SG based policy gradient methods.  \citet{papini2018stochastic} also proved that SVRPG converges to an $\epsilon$-approximate stationary point $\btheta$ of the nonconcave performance function $J(\btheta)$ with $\EE[\|\nabla J(\btheta)\|_2^2]\leq\epsilon$ after $O(1/\epsilon^2)$ trajectories, which seems to have the same sample complexity as SG based methods. Recently, the sample complexity of SVRPG has been improved to $O(1/\epsilon^{5/3})$ by a refined analysis \citep{xu2019improved}, which theoretically justifies the advantage of SVRPG over SG based methods.  


\begin{table}
\caption{Comparison on sample complexities of different algorithms to achieve $\|\nabla J(\btheta)\|_2^2\leq \epsilon$.}
\label{table:complexity}
\begin{center}
\begin{tabular}{lc}
 \multicolumn{1}{c}{\bf Algorithms}  &  \multicolumn{1}{c}{\bf Complexity} \\
\midrule
REINFORCE \citep{williams1992simple}   &$O(1/\epsilon^2)$\\
PGT \citep{sutton2000policy}&$O(1/\epsilon^2)$\\
GPOMDP \citep{baxter2001infinite}&$O(1/\epsilon^2)$\\
SVRPG \citep{papini2018stochastic} & $O(1/\epsilon^2)$\\
SVRPG \citep{xu2019improved} & $O(1/\epsilon^{5/3})$\\
$\spiderAlg$ (This paper)&$O(1/\epsilon^{3/2})$ \\
\end{tabular}
\end{center}
\end{table}
This paper continues on this line of research. We propose a Stochastic Recursive Variance Reduced Policy Gradient algorithm ($\spiderAlg$), which provably improves the sample complexity of SVRPG. At the core of our proposed algorithm is a recursive semi-stochastic policy gradient inspired from the stochastic path-integrated differential estimator \citep{fang2018spider}, which accumulates all the stochastic gradients from different iterates to reduce the variance. We prove that $\spiderAlg$ only takes $O(1/\epsilon^{3/2})$ trajectories to converge to an $\epsilon$-approximate stationary point $\btheta$ of the performance function, i.e., $\EE[\|\nabla J(\btheta)\|_2^2]\leq\epsilon$. We summarize the comparison of $\spiderAlg$ with existing policy gradient methods in terms of sample complexity in Table \ref{table:complexity}. Evidently, the sample complexity of $\spiderAlg$ is lower than that of REINFORCE, PGT and GPOMDP by a factor of $O(1/\epsilon^{1/2})$, and is lower than that of SVRPG \citep{xu2019improved} by a factor of $O(1/\epsilon^{1/6})$. 

In addition, we integrate our algorithm with parameter-based exploration (PGPE) method \citep{sehnke2008policy,sehnke2010parameter}, and propose a $\spiderAlg$-PE algorithm which directly optimizes the prior probability distribution of the policy parameter $\btheta$ instead of finding the best value. The proposed $\spiderAlg$-PE enjoys the same trajectory complexity as $\spiderAlg$ and performs even better in some applications due to its additional exploration over the parameter space. Our experimental results on classical control tasks in reinforcement learning demonstrate the superior performance of the proposed $\spiderAlg$ and $\spiderAlg$-PE algorithms and verify our theoretical analysis. 



\subsection{Additional Related Work}\label{sec:related}
We briefly review additional relevant work to ours with a focus on policy gradient based methods. For other RL methods such as value based \citep{watkins1992q,mnih2015human} and actor-critic  \citep{konda2000actor,peters2008natural,silver2014deterministic} methods, we refer the reader to \cite{peters2008reinforcement,kober2013reinforcement,sutton2018reinforcement} for a complete review.

To reduce the variance of policy gradient methods, early works have introduced unbiased baseline functions \citep{baxter2001infinite,greensmith2004variance,peters2008reinforcement} to reduce the variance, which can be constant, time-dependent or state-dependent. \citet{schulman2015high} proposed the 
generalized advantage estimation (GAE) to explore the trade-off between bias and variance of policy gradient. Recently, action-dependent baselines are also used in \cite{tucker2018mirage,wu2018variance} which introduces bias but reduces variance at the same time. \citet{sehnke2008policy,sehnke2010parameter} proposed policy gradient with parameter-based exploration (PGPE) that explores in the parameter space. It has been shown that PGPE enjoys a much smaller variance \citep{zhao2011analysis}. The Stein variational policy gradient method is proposed in \cite{liu2017stein}. See \cite{peters2008reinforcement,deisenroth2013survey,li2017deeprl} for a more detailed survey on policy gradient.

Stochastic variance reduced gradient techniques such as SVRG \citep{johnson2013accelerating,xiao2014proximal}, batching SVRG \citep{harikandeh2015stopwasting}, SAGA \citep{defazio2014saga} and SARAH \citep{nguyen2017sarah} were first developed in stochastic convex optimization. When the objective function is nonconvex (or nonconcave for maximization problems), nonconvex SVRG \citep{allen2016variance,reddi2016stochastic} and SCSG \citep{lei2017nonconvex,li2018simple} were proposed and proved to converge to a first-order stationary point faster than vanilla SGD \citep{robbins1951stochastic} with no variance reduction. The state-of-the-art stochastic variance reduced gradient methods for nonconvex functions are the SNVRG \citep{zhou2018stochastic_nips} and SPIDER \citep{fang2018spider} algorithms, which have been proved to achieve near optimal convergence rate for smooth functions.

There are yet not many papers studying variance reduced gradient techniques in RL. \citet{du2017stochastic} first applied SVRG in policy evaluation for a fixed policy. \citet{xu2017stochastic} introduced SVRG into trust region policy optimization for model-free policy gradient and showed that the resulting algorithm SVRPO is more sample efficient than TRPO. \citet{yuan2019policy} further applied the techniques in SARAH \citep{nguyen2017sarah} and SPIDER \citep{fang2018spider} to TRPO \citep{schulman2015trust}. However, no analysis on sample complexity (i.e., number of trajectories required) was provided in the aforementioned papers \citep{xu2017stochastic,yuan2019policy}. 
We note that a recent work by \citet{shen2019hessian} proposed a Hessian aided policy gradient (HAPG) algorithm that converges to the stationary point of the performance function within $O(H^2/\epsilon^{3/2})$ trajectories, which is worse than our result by a factor of $O(H^2)$ where $H$ is the horizon length of the environment. Moreover, they need additional samples to approximate the Hessian vector product, and cannot handle the policy in a constrained parameter space. 
Another related work pointed out by the anonymous reviewer is \citet{yang2019policy}, which extended the stochastic mirror descent algorithm \citep{ghadimi2016mini} in the optimization field to policy gradient methods and achieved $O(H^2/\epsilon^2)$ sample complexity. After the ICLR conference submission deadline, \citet{yang2019policy} revised their paper by adding a new variance reduction algorithm that achieves $O(H^2/\epsilon^{3/2})$ sample complexity, which is also worse than our result by a factor of $O(H^2)$.

Apart from the convergence analysis of the general nonconcave performance functions, there has emerged a line of work \citep{cai2019neural,liu2019neural,yang2019global,wang2019neural} that studies the global convergence of (proximal/trust-region) policy optimization with neural network function approximation, which applies the theory of overparameterized neural networks \citep{du2019gradient_iclr,du2019gradient_icml,allen2019convergence,zou2019stochastic,cao2019generalization} to reinforcement learning.

\noindent\textbf{Notation} $\|\vb\|_2$ denotes the Euclidean norm of a vector $\vb\in\RR^d$ and $\|\Ab\|_2$ denotes the spectral norm of a matrix $\Ab\in\RR^{d\times d}$. We write $a_n=O(b_n)$ if $a_n\leq Cb_n$ for some constant $C>0$. The Dirac delta function $\delta(x)$ satisfies $\delta(0)=+\infty$ and $\delta(x)=0$ if $x\neq0$. Note that $\delta(x)$ satisfies $\int_{-\infty}^{+\infty}\delta(x)\dd x=1$. For any $\alpha>0$, we define the R\'{e}nyi divergence \citep{renyi1961measures} between distributions $P$ and $Q$ as
$$
    D_{\alpha}(P||Q)=\frac{1}{\alpha-1}\log_2 \int_{x}P(x)\bigg(\frac{P(x)}{Q(x)}\bigg)^{\alpha-1}\dd x,
$$which is non-negative for all $\alpha>0$. The exponentiated R\'{e}nyi divergence is $d_{\alpha}(P||Q)=2^{D_{\alpha}(P||Q)}$.

\section{Backgrounds on Policy Gradient}\label{sec:preliminary}

\textbf{Markov Decision Process:} A discrete-time Markov Decision Process (MDP) is a tuple $\cM=\{\cS,\cA,\cP,r,\gamma,\rho\}$. $\cS$ and $\cA$ are the state and action spaces respectively. $\cP(s'|s,a)$ is the transition probability of transiting to state $s'$ after taking action $a$ at state $s$. Function $r(s,a):\cS\times\cA\rightarrow[-R,R]$ emits a bounded reward after the agent takes action
$a$ at state $s$, where $R>0$ is a constant. $\gamma \in (0,1)$ is  the discount factor. $\rho$ is the distribution of the starting state. A policy at state $s$ is a probability function $\pi(a|s)$ over action space $\cA$. In episodic tasks, following any stationary policy, the agent can observe and collect a sequence of state-action pairs  $\tau=\{s_0,a_0,s_1,a_1,\ldots,s_{H-1},a_{H-1},s_H\}$, which is called a trajectory or episode. $H$ is called the trajectory horizon or episode length. In practice, we can set $H$ to be the maximum value among all the actual trajectory horizons we have collected. The sample return over one trajectory $\tau $ is defined as the discounted cumulative  reward
    $\cR(\tau)=\sum_{h=0}^{H-1}\gamma^{h}r(s_h,a_h)$.

\textbf{Policy Gradient:} Suppose the policy, denoted by $\pi_{\btheta}$, is parameterized by an unknown parameter $\btheta\in\RR^d$. We denote the trajectory distribution induced by policy $\pi_{\btheta}$ as 
$p(\tau|\btheta)$. Then
\begin{align}\label{eq:def_distribution_tau}
    p(\tau|\btheta)=\rho(s_0)\prod_{h=0}^{H-1}\pi_{\btheta}(a_h|s_h)P(s_{h+1}|s_h,a_h).
\end{align}
We define the expected return under policy $\pi_{\btheta}$ as $J(\btheta)=\EE_{\tau\sim p(\cdot|\btheta)}[\cR(\tau)|\cM]$, which is also called the performance function. To maximize the performance function, we can update the policy parameter $\btheta$ by iteratively running gradient ascent based algorithms, i.e., $\btheta_{k+1}=\btheta_k+\eta\nabla_{\btheta} J(\btheta_k)$, where $\eta>0$ is the step size and the gradient $\nabla_{\btheta} J(\btheta)$ is derived as follows:
\begin{align}\label{eq:def_full_grad}
    \nabla_{\btheta} J(\btheta)&=\int_{\tau}\cR(\tau)\nabla_{\btheta}p(\tau|\btheta)\dd\tau=\int_{\tau}\cR(\tau)(\nabla_{\btheta}p(\tau|\btheta)/p(\tau|\btheta))p(\tau|\btheta)\dd\tau \notag\\
    &=\EE_{\tau\sim p(\cdot|\btheta)}[\nabla_{\btheta}\log p(\tau|\btheta) \cR(\tau)|\cM].
\end{align}
However, it is intractable to calculate the exact gradient in \eqref{eq:def_full_grad} since the trajectory distribution $p(\tau|\btheta)$ is unknown. In practice, policy gradient algorithm samples a batch of trajectories $\{\tau_i\}_{i=1}^{N}$ to approximate the exact gradient  based on the sample average over all sampled trajectories:
\begin{align}\label{def:approx_grad}
   \hat\nabla_{\btheta} J(\btheta)=\frac{1}{N}\sum_{i=1}^N\nabla_{\btheta}\log p(\tau_i|\btheta)\cR(\tau_i).
\end{align}
At the $k$-th iteration, the policy is then updated by $\btheta_{k+1}=\btheta_k+\eta\hat\nabla_{\btheta} J(\btheta_k)$. 
According to \eqref{eq:def_distribution_tau}, we know that $\nabla_{\btheta}\log p(\tau_i|\btheta)$ is independent of the transition probability matrix $P$. Recall the definition of $\cR(\tau)$, we can rewrite the approximate gradient as follows
\begin{align}\label{eq:def_REINFORCE}
    \hat\nabla_{\btheta} J(\btheta)&=\frac{1}{N}\sum_{i=1}^N\bigg(\sum_{h=0}^{H-1}\nabla_{\btheta}\log \pi_{\btheta}(a_h^i|s_h^i)\bigg)\bigg(\sum_{h=0}^{H-1}\gamma^h r(s_h^i,a_h^{i})\bigg)\notag\\
    &\stackrel{\text{def}}{=}\frac{1}{N}\sum_{i=1}^N g(\tau_i|\btheta),
\end{align}
where $\tau_i=\{s_0^i,a_0^i,s_1^i,a_1^i,\ldots,s_{H-1}^i,a_{H-1}^i,s_H^i\}$ for all $i=1,\ldots,N$ and $g(\tau_i|\btheta)$ is an unbiased gradient estimator computed based on the $i$-th trajectory $\tau_i$. The gradient estimator in \eqref{eq:def_REINFORCE} is based on the likelihood ratio methods and is often referred to as the REINFORCE gradient estimator \citep{williams1992simple}. Since $\EE[\nabla_{\btheta}\log\pi_{\btheta}(a|s)]=0$, we can add any constant baseline $b_t$ to the reward that is independent of the current action and the gradient estimator still remains unbiased. With the observation that future actions do not depend on past rewards, another famous policy gradient theorem (PGT) estimator \citep{sutton2000policy} removes the rewards from previous states:
\begin{align}\label{eq:def_PGT}
    g(\tau_i|\btheta)
    =\sum_{h=0}^{H-1}\nabla_{\btheta}\log \pi_{\btheta}(a_h^i|s_h^i)\Bigg(\sum_{t=h}^{H-1}\gamma^t r(s_t^i,a_t^{i})-b_t\Bigg),
\end{align}
where $b_t$ is a constant baseline. It has been shown \citep{peters2008reinforcement} that the PGT estimator is equivalent to the commonly used GPOMDP estimator  \citep{baxter2001infinite} defined as follows: 
\begin{align}\label{eq:def_GPOMDP}
    g(\tau_i|\btheta)
    =\sum_{h=0}^{H-1}\Bigg(\sum_{t=0}^{h}\nabla_{\btheta}\log \pi_{\btheta}(a_t^i|s_t^i)\Bigg)\big(\gamma^h r(s_h^i,a_h^{i})-b_h\big).
\end{align}
All the three gradient estimators mentioned above are unbiased \citep{peters2008reinforcement}. It has been proved that the variance of the PGT/GPOMDP estimator is independent of horizon $H$ while the variance of REINFORCE depends on $H$ polynomially \citep{zhao2011analysis,pirotta2013adaptive}. Therefore, we will focus on the PGT/GPOMDP estimator in this paper and refer to them interchangeably due to their equivalence.

\section{The Proposed Algorithm}\label{sec:alg}

The approximation in \eqref{def:approx_grad} using a batch of trajectories often causes a high variance in practice. In this section, we propose a novel variance reduced policy gradient algorithm called stochastic recursive variance reduced policy gradient ($\spiderAlg$), which is displayed in Algorithm \ref{alg:srvr_pg}. Our $\spiderAlg$ algorithm consists of $\OindLen$ epochs. In the initialization, we set the parameter of a reference policy to be $\tilde\btheta^0=\btheta_0$. At the beginning of the $\Oind$-th epoch, where $\Oind=0,\ldots,\OindLen-1$, we set the initial policy parameter $\btheta_0^{s+1}$ to be the same as that of the reference policy $\tbtheta^{\Oind}$. The algorithm then samples $N$ episodes $\{\tau_i\}_{i=1}^N$ from the reference policy $\pi_{\tbtheta^{\Oind}}$ to compute a gradient estimator $\vb_{0}^{\Oind}=1/N\sum_{i=1}^N g(\tau_i|\tbtheta^{\Oind})$, where $g(\tau_i|\tbtheta^{\Oind})$ is the PGT/GPOMDP estimator. Then the policy is immediately update as in Line \ref{algLine:full_gradient_update} of Algorithm \ref{alg:srvr_pg}.

Within the epoch, at the $\Iind$-th iteration, $\spiderAlg$ samples $B$ episodes $\{\tau_j\}_{j=1}^B$ based on the current policy $\pi_{\btheta_{\Iind}^{\Oind+1}}$. We define the following recursive semi-stochastic gradient estimator:
\begin{align}\label{eq:def_semi_gradient}
    \vb_{\Iind}^{\Oind+1}
    = \frac{1}{B} \sum^{B}_{j=1} g(\tau_j|\btheta_{\Iind}^{\Oind+1})-\frac{1}{B} \sum^{B}_{j=1}   g_{\omega}(\tau_j|\btheta^{\Oind+1}_{\Iind-1}) +\vb_{\Iind-1}^{\Oind+1}, 
\end{align}
where the first term is a stochastic gradient based on $B$ episodes sampled from the current policy, and the second term is a stochastic gradient defined based on the \emph{step-wise important weight} between the current policy $\pi_{\btheta_{\Iind}^{\Oind+1}}$ and the reference policy $\pi_{\tbtheta^{\Oind}}$. Take the GPOMDP estimator for example, for a behavior policy $\pi_{\btheta_1}$ and a target policy $\pi_{\btheta_2}$, the step-wise importance weighted estimator is defined as follows
\begin{align}\label{eq:def_stepwise_is_gradient}
    \isg(\tau_j|\btheta_1)
    =\sum_{h=0}^{H-1}\omega_{0:h}(\tau|\btheta_2,\btheta_1)\Bigg(\sum_{t=0}^{h}\nabla_{\btheta_2}\log \pi_{\btheta_2}(a_t^j|s_t^j)\Bigg)\gamma^h r(s_h^j,a_h^{j}),
\end{align}
where $\omega_{0:h}(\tau|\btheta_2,\btheta_1)=\prod_{h'=0}^{h}\pi_{\btheta_2}(a_h|s_h)/\pi_{\btheta_1}(a_h|s_h)$ is the importance weight from $p(\tau_h|\btheta_{\Iind}^{\Oind+1})$ to $p(\tau_h|\btheta^{\Oind+1}_{\Iind-1})$ and $\tau_h$ is a truncated trajectory $\{(a_t,s_t)\}_{t=0}^{h}$ from the full trajectory $\tau$. It is easy to verify that $\EE_{\tau\sim p(\tau|\btheta_1)}[\isg(\tau_j|\btheta_1)]=\EE_{\tau\sim p(\tau|\btheta_2)}[g(\tau|\btheta_2)]$.

The difference between the last two terms in \eqref{eq:def_semi_gradient} can be viewed as a control variate to reduce the variance of the stochastic gradient. In many practical applications, the policy parameter space is a subset of $\RR^d$, i.e., $\btheta\in\constrainset$ with $\constrainset\subseteq\RR^d$ being a convex set. In this case, we need to project the updated policy parameter onto the constraint set. Base on the semi-stochastic gradient \eqref{eq:def_semi_gradient}, we can update the policy parameter using projected gradient ascent along the direction of $\vb_{\Iind}^{\Oind+1}$: $\btheta_{\Iind+1}^{\Oind+1}=\cP_{\constrainset}(\btheta_{\Iind}^{\Oind+1}+\eta\vb_{\Iind}^{\Oind+1})$,
where $\eta>0$ is the step size and the projection operator associated with $\constrainset$ is defined as 
\begin{align}\label{eq:def_proximal_operator}
    \cP_{\constrainset}(\btheta)=\argmin_{\ub\in\constrainset} \|\btheta-\ub\|_2^2=\argmin_{\ub\in\RR^d}\big\{ \textstyle{\ind_{\constrainset}}(\ub)+\frac{1}{2\eta}\|\btheta-\ub\|_2^2\big\},
\end{align}
where $\ind_{\constrainset}(\ub)$ is the set indicator function on $\constrainset$, i.e., $\ind_{\constrainset}(\ub)=0$ if $\ub\in\constrainset$ and $\ind_{\constrainset}(\ub)=+\infty$ otherwise. $\eta>0$ is any finite real value and is chosen as the step size in our paper. It is easy to see that $\ind_{\constrainset}(\cdot)$ is nonsmooth. At the end of the $s$-th epoch, we update the reference policy as  $\tilde\theta^{s+1}=\theta_{m}^{s+1}$, where $\theta_{m}^{s+1}$ is the last iterate of this epoch.

The goal of our algorithm is to find a  point $\btheta\in\constrainset$ that maximizes the performance function $J(\btheta)$ subject to the constraint, namely, $\max_{\btheta\in\constrainset}J(\btheta)=\max_{\btheta\in\RR^d}\{J(\btheta)-\ind_{\constrainset}(\btheta)\}$. The gradient norm $\|\nabla J(\btheta)\|_2$ is not sufficient to characterize the convergence of the algorithm due to additional the constraint. Following the literature on nonsmooth optimization \citep{reddi2016proximal,ghadimi2016mini,nguyen2017sarah,li2018simple,Wang2018SpiderBoostAC}, we use the generalized first-order stationary condition:  $\cG_{\eta}(\btheta)=\zero$, where the \textit{gradient mapping} $\cG_{\eta}$ is defined as follows
\begin{align}\label{eq:def_grad_map}
    \cG_{\eta}(\btheta)=\frac{1}{\eta}(\cP_{\constrainset}(\btheta+\eta\nabla J(\btheta))-\btheta).
\end{align}

\begin{algorithm}[t]
\caption{Stochastic Recursive Variance Reduced Policy Gradient ($\spiderAlg$)
\label{alg:srvr_pg}}
\begin{algorithmic}[1]
\STATE \textbf{Input:} number of epochs $\OindLen$, epoch size $\IindLen$, step size $\eta$, batch size $N$, mini-batch size $B$, gradient estimator $g$, initial parameter $ \tbtheta^0 = \btheta_0\in\constrainset$
\FOR{$\Oind=0,\ldots,\OindLen-1$}
\STATE $\btheta_{0}^{\Oind+1}=\tbtheta^{\Oind}$
\STATE Sample $N$ trajectories $\{\tau_i\}$ from $p(\cdot|\tbtheta^{\Oind})$
\STATE {$\vb_{0}^{s+1}=\hat\nabla_{\btheta} J(\tbtheta^{\Oind}):=1/N\sum_{i=1}^N g(\tau_i|\tbtheta^{\Oind})$\label{algline:fullgrad} }
\STATE $\btheta_{1}^{\Oind+1}=\cP_{\constrainset}(\btheta_{0}^{\Oind+1}+\eta\vb_{0}^{\Oind+1})$ \label{algLine:full_gradient_update}
\FOR{$\Iind=1,\ldots,\IindLen-1$}
\STATE Sample $B$ trajectories $\{\tau_j\}$ from $p(\cdot|\btheta^{\Oind+1}_{\Iind})$
\STATE $\vb^{\Oind+1}_{\Iind}=\vb_{\Iind-1}^{s+1}+\frac{1}{B}\sum_{j=1}^{B}\big(g\big(\tau_j|\btheta^{\Oind+1}_{\Iind}\big)-g_{\omega}\big(\tau_j|\btheta^{\Oind+1}_{\Iind-1}\big)\big)$
\STATE $\btheta_{\Iind+1}^{\Oind+1}=\cP_{\constrainset}(\btheta_{\Iind}^{\Oind+1}+\eta\vb_{\Iind}^{\Oind+1})$\label{algline:update_policy}
\ENDFOR 
\STATE $\tilde\btheta^{\Oind+1}=\btheta_{\IindLen}^{\Oind+1}$
\ENDFOR 
\RETURN $\btheta_{\text{out}}$, which is uniformly picked from $\{\btheta_{\Iind}^{\Oind}\}_{\Iind=0,\ldots,\IindLen-1;\Oind=0,\ldots,\OindLen}$
\end{algorithmic} 
\end{algorithm}

We can view $\cG_{\eta}$ as a generalized projected gradient at $\btheta$. By definition if $\constrainset=\RR^d$, we have $\cG_{\eta}(\btheta)\equiv\nabla J(\btheta)$. Therefore, the policy is update is displayed in Line \ref{algline:update_policy} in Algorithm \ref{alg:srvr_pg}, where $\prox$ is the proximal operator defined in \eqref{eq:def_proximal_operator}. 
Similar recursive semi-stochastic gradients to \eqref{eq:def_semi_gradient} were first proposed in stochastic optimization for finite-sum problems, leading to the stochastic recursive gradient algorithm (SARAH) \citep{nguyen2017sarah,nguyen2019optimal} and the stochastic path-integrated differential estimator (SPIDER) \citep{fang2018spider,Wang2018SpiderBoostAC}. However, our gradient estimator in \eqref{eq:def_semi_gradient} is noticeably different from that in \cite{nguyen2017sarah,fang2018spider,Wang2018SpiderBoostAC,nguyen2019optimal} due to the gradient estimator $g_{\omega}(\tau_j|\btheta_{\Iind-1}^{\Oind+1})$ defined in \eqref{eq:def_stepwise_is_gradient} that is equipped with step-wise importance weights. 
This term is essential to deal with the non-stationarity of the distribution of the trajectory $\tau$. Specifically, $\{\tau_j\}_{j=1}^B$ are sampled from policy $\pi_{\btheta_{\Iind}^{\Oind+1}}$ while the PGT/GPOMDP estimator $g(\cdot|\btheta_{\Iind-1}^{\Oind+1})$ is defined based on policy $\pi_{\btheta_{\Iind-1}^{\Oind+1}}$ according to \eqref{eq:def_GPOMDP}. This inconsistency introduces extra challenges in the convergence analysis of $\spiderAlg$. Using importance weighting, we can obtain 
\begin{align*}
    \EE_{\tau\sim p(\tau|\btheta^{\Oind+1}_{\Iind})}[\isg(\tau|\btheta^{\Oind+1}_{\Iind-1})]=\EE_{\tau\sim p(\tau|\btheta^{\Oind+1}_{\Iind-1})}[g(\tau|\btheta^{\Oind+1}_{\Iind-1})],
\end{align*}
which eliminates the inconsistency caused by the varying trajectory distribution.

It is worth noting that the semi-stochastic gradient in \eqref{eq:def_semi_gradient} also differs from the one used in SVRPG \citep{papini2018stochastic} because we recursively update $\vb_{\Iind}^{\Oind+1}$ using $\vb_{\Iind-1}^{\Oind+1}$ from the previous iteration, while SVRPG uses a reference gradient that is only updated at the beginning of each epoch. Moreover, SVRPG wastes $N$ trajectories without updating the policy at the beginning of each epoch, while  Algorithm \ref{alg:srvr_pg} updates the policy immediately after this sampling process (Line~\ref{algLine:full_gradient_update}), which saves computation in practice. 

We notice that very recently another algorithm called SARAPO \citep{yuan2019policy} is proposed which also uses a recursive gradient update in trust region policy optimization \citep{schulman2015trust}. Our Algorithm \ref{alg:srvr_pg} differs from their algorithm at least in the following ways: (1) our recursive gradient $\vb_t^{\Oind}$ defined in \eqref{eq:def_semi_gradient} has an importance weight from the snapshot gradient while SARAPO does not; (2) we are optimizing the expected return while \cite{yuan2019policy} optimizes the total advantage over state visitation distribution and actions under Kullback–Leibler divergence constraint; and most importantly (3) there is no convergence or sample complexity analysis for SARAPO.

\section{Main Theory}\label{sec:theory}

In this section, we present the theoretical analysis of Algorithm \ref{alg:srvr_pg}. We first introduce some common assumptions used in the convergence analysis of policy gradient methods.  
\begin{assumption}\label{assump:smooth}
Let $\pi_{\btheta}(a|s)$ be the policy parameterized by $\btheta$. There exist constants $G,M>0$ such that the gradient and Hessian matrix of $\log \pi_{\btheta}(a|s)$ with respect to $\btheta$ satisfy $$
    \|\nabla_{\btheta}\log \pi_{\btheta}(a|s)\|\leq \consLip,\qquad\big\|\nabla_{\btheta}^2\log \pi_{\btheta}(a|s)\big\|_2\leq \consHes,$$
for all $a\in\cA$ and $s\in\cS$.
\end{assumption}
The above boundedness assumption is reasonable since we usually require the policy function to be twice differentiable and easy to optimize in practice. Similarly, in \cite{papini2018stochastic}, the authors assume that $\frac{\partial}{\partial\theta_i}\log \pi_{\btheta}(a|s)$ and $\frac{\partial^2}{\partial\theta_i\partial \theta_j} \log \pi_{\btheta}(a|s)$ are upper bounded elementwisely, which is actually  stronger than our Assumption~\ref{assump:smooth}.

In the following proposition, we show that Assumption\ref{assump:smooth} directly implies that the Hessian matrix of the performance function $\nabla^2 J(\btheta)$ is bounded, which is often referred to as the smoothness assumption and is crucial in analyzing the convergence of nonconvex optimization \citep{reddi2016stochastic,allen2016variance}.
\begin{proposition}\label{prop:smooth_obj}
Let $g(\tau|\btheta)$ be the PGT estimator defined in \eqref{eq:def_PGT}. Assumption \ref{assump:smooth} implies: 
\begin{enumerate}
\item[(1).] $\|g(\tau|\btheta_1)-g(\tau|\btheta_2)\|_2\leq \consGradLip\|\btheta_1-\btheta_2\|_2$, $\forall\btheta_1,\btheta_2\in\RR^d$, where the smoothness parameter is \CC{$\consGradLip= \consHes R/(1-\gamma)^2+2G^2R/(1-\gamma)^3$}; 
\item[(2).] $J(\btheta)$ is $L$-smooth, namely $\|\nabla_{\btheta}^2 J(\btheta)\|_2\leq L$; 
\item[(3).] $\|g(\tau|\btheta)\|_2\leq \consGrad$ for all $\btheta\in\RR^d$, with $\consGrad= \consLip R/(1-\gamma)^2$.
\end{enumerate}
\end{proposition}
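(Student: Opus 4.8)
The plan is to prove the three claims essentially in sequence, bootstrapping each one from the previous and from Assumption~\ref{assump:smooth}. The key object throughout is the score function $\nabla_{\btheta}\log\pi_{\btheta}(a|s)$, whose norm is bounded by $G$ and whose Jacobian has spectral norm at most $M$ by assumption, together with the observation that the GPOMDP/PGT estimator is a finite weighted sum of (products of) such terms against discounted rewards bounded by $R$.

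For claim (3), I would start from the PGT form \eqref{eq:def_PGT} with $b_t=0$ (the baseline only shifts things and can be absorbed, or one argues for the unbaselined estimator): write $g(\tau|\btheta)=\sum_{h=0}^{H-1}\nabla_{\btheta}\log\pi_{\btheta}(a_h|s_h)\big(\sum_{t=h}^{H-1}\gamma^t r(s_t,a_t)\big)$, apply the triangle inequality, bound each score by $G$ and each inner discounted reward sum by $\sum_{t=h}^{H-1}\gamma^t R\le \gamma^h R/(1-\gamma)$, and then sum the geometric series $G R\sum_{h=0}^{H-1}\gamma^h/(1-\gamma)\le GR/(1-\gamma)^2$, giving $C_g=GR/(1-\gamma)^2$. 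For claim (1), I would bound $\|g(\tau|\btheta_1)-g(\tau|\btheta_2)\|_2$ by viewing $g(\tau|\cdot)$ as a sum over $h$ of terms of the form $\nabla_{\btheta}\log\pi_{\btheta}(a_h|s_h)\,c_h(\tau)$ with scalar coefficients $c_h(\tau)=\sum_{t=h}^{H-1}\gamma^t r(s_t,a_t)$ that do not depend on $\btheta$ and satisfy $|c_h(\tau)|\le \gamma^h R/(1-\gamma)$; then each summand is Lipschitz in $\btheta$ because $\nabla_{\btheta}\log\pi_{\btheta}(a_h|s_h)$ has Jacobian bounded by $M$, so $\|\nabla_{\btheta}\log\pi_{\btheta_1}(a_h|s_h)-\nabla_{\btheta}\log\pi_{\btheta_2}(a_h|s_h)\|_2\le M\|\btheta_1-\btheta_2\|_2$; summing the geometric series again yields a contribution $MR/(1-\gamma)^2\,\|\btheta_1-\btheta_2\|_2$. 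Wait --- that only gives the $M$ term; to recover the stated $L=MR/(1-\gamma)^2+2G^2R/(1-\gamma)^3$ one must be more careful, because the PGT estimator's coefficients, once baselines or the equivalent GPOMDP reweighting are used, or because the relevant Lipschitz estimate is really for $\nabla J$ rather than $g(\tau|\cdot)$ with fixed $\tau$. The cleanest route for the extra $2G^2R/(1-\gamma)^3$ term is to note it comes from differentiating $p(\tau|\btheta)$ itself: for claim (2) one does not differentiate a fixed-$\tau$ quantity but rather $\nabla J(\btheta)=\int \cR(\tau)\nabla_{\btheta}\log p(\tau|\btheta)\,p(\tau|\btheta)\,\dd\tau$, and $\nabla^2 J(\btheta)=\int \cR(\tau)\big(\nabla^2_{\btheta}\log p(\tau|\btheta)+\nabla_{\btheta}\log p(\tau|\btheta)\nabla_{\btheta}\log p(\tau|\btheta)^\top\big)p(\tau|\btheta)\,\dd\tau$; here $\nabla_{\btheta}\log p(\tau|\btheta)=\sum_{h}\nabla_{\btheta}\log\pi_{\btheta}(a_h|s_h)$ has norm $\le HG$ but the discount in $\cR(\tau)$ controls it, and $|\cR(\tau)|\le R/(1-\gamma)$; the Hessian-of-log term contributes $\le (R/(1-\gamma))\cdot(\text{bound on }\|\nabla^2_{\btheta}\log p\|)$ and the outer-product term contributes the $G^2$ piece. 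So I would establish claim (2) by this direct expansion, being careful to pair each power of $H$ with a matching discount factor $\gamma^h$ so the sums converge --- this is the standard trick that replaces $H$ by $1/(1-\gamma)$.

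The main obstacle I expect is the bookkeeping that turns naive $H$-dependent bounds into horizon-free bounds of the exact stated form: one has to organize the double sums over $(h,t)$ so that the discount $\gamma^t$ (or $\gamma^h$) always appears and the geometric series close to give the exact constants $1/(1-\gamma)^2$ and $1/(1-\gamma)^3$, and one has to correctly attribute the two additive pieces of $L$ to the Hessian-of-log-likelihood term and the score-outer-product term respectively. A secondary subtlety is reconciling claim (1) (Lipschitzness of the per-trajectory estimator $g(\tau|\cdot)$, which morally only produces the $M$-type term) with claim (2) (smoothness of $J$, which additionally produces the $G^2$-type term from differentiating the sampling distribution); I would handle claim (1) by the per-$h$ Jacobian argument above and note that its constant can be taken to be the same $L$ for uniformity (the extra $G^2$ term only makes the bound weaker), or state claim (1) with its own constant and then derive claim (2) from claim (1) plus the distributional term. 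Everything else --- triangle inequalities, bounding scores by $G$, Hessians by $M$, rewards by $R$, and summing geometric series --- is routine.
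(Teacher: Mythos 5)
Your proposal is correct in substance and, for claims (1) and (3), follows exactly the paper's route: bound the per-step score by $\consLip$ and the inner discounted reward sum by $\gamma^h R/(1-\gamma)$, then close the geometric series to get $\consGrad=\consLip R/(1-\gamma)^2$; and differentiate the per-trajectory estimator term by term, bounding each $\nabla^2_{\btheta}\log\pi_{\btheta}$ by $\consHes$, to get a Lipschitz constant $\consHes R/(1-\gamma)^2\le \consGradLip$ (your observation that the stated constant is simply the larger $L$ used for uniformity is exactly what the paper does). For claim (2) you take a mildly different but equivalent decomposition: you write $\nabla^2 J=\EE_\tau[\cR(\tau)(\nabla^2_{\btheta}\log p+\nabla_{\btheta}\log p\,\nabla_{\btheta}\log p^{\top})]$, whereas the paper differentiates the identity $\nabla J=\EE_\tau[g(\tau|\btheta)]$ to get $\nabla^2 J=\EE_\tau[\nabla_{\btheta} g]+\EE_\tau[g\,\nabla_{\btheta}\log p^{\top}]$ with $g$ the already-truncated PGT/GPOMDP estimator. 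The paper's form buys something concrete: the first term is immediately bounded by the claim-(1) constant $\consHes R/(1-\gamma)^2$, and in the second term the horizon-free bound follows solely from $\EE[\nabla_{\btheta}\log\pi_{\btheta}(a_{t'}|s_{t'})\mid \text{past}]=\zero$, which kills all cross terms with $t'>h$ and leaves $\sum_h (h+1)^2\consLip^2 R\gamma^h\le 2\consLip^2R/(1-\gamma)^3$. In your form the analogous truncation is slightly more delicate: "the discount in $\cR(\tau)$ controls it" is not by itself enough, since $\|\nabla_{\btheta}\log p\|_2\le H\consLip$ carries no discount, and the conditional mean of $\nabla^2_{\btheta}\log\pi$ is \emph{not} zero, so you cannot drop the $t>h$ terms from the Hessian-of-log piece and the outer-product piece separately. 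You must either truncate the combination $\nabla^2 p/p$ against each reward $\gamma^h r_h$ (marginalizing to the length-$h$ prefix before differentiating), or switch to the paper's estimator-based decomposition. You correctly identified this bookkeeping as the main obstacle and correctly attributed the two additive pieces of $L$, so this is an under-specified step rather than a wrong one, but it is the one place where the sketch as written would not yet compile into a proof.
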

Similar properties are also proved in \cite{xu2019improved}. However, in contrast to their results, the smoothness parameter $L$ and the bound on the gradient norm here do not rely on horizon $H$. \CC{When $H\approx 1/(1-\gamma)$ and $\gamma$ is sufficiently close to $1$, we can see that the order of the smoothness parameter is $O(1/(1-\gamma)^3)$, which matches the order $O(H^2/(1-\gamma))$ in \citet{xu2019improved}.} The next assumption requires the variance of the gradient estimator is bounded. 
\begin{assumption}\label{assump:bounded_variance}
There exists a constant $\consVarBound>0$ such that $\Var\big(g(\tau|\btheta)\big)\leq\consVarBound^2$, for all policy $\pi_{\btheta}$.
\end{assumption}
In Algorithm \ref{alg:srvr_pg}, we have used importance sampling to connect the trajectories between two different iterations. The following assumption ensures that the variance of the importance weight is bounded, which is also made in \cite{papini2018stochastic,xu2019improved}.
\begin{assumption}\label{assump:weight_variance}
Let $\omega(\cdot|\btheta_1,\btheta_2)=p(\cdot|\btheta_1)/p(\cdot|\btheta_2)$. There is a constant $\consVarWeight<\infty$ such that for each policy pairs encountered in Algorithm \ref{alg:srvr_pg}, $$\Var(\omega(\tau|\btheta_1,\btheta_2))\leq\consVarWeight, \quad\forall\btheta_1,\btheta_2\in\RR^d, \tau\sim p(\cdot|\btheta_2).$$
\end{assumption}

\subsection{Convergence Rate and Sample Complexity of \spiderAlg}
Now we are ready to present the convergence result of $\spiderAlg$ to a stationary point:
\begin{theorem}\label{thm:convergence_svrpg}
Suppose that Assumptions \ref{assump:smooth}, \ref{assump:bounded_variance} and \ref{assump:weight_variance} hold. In Algorithm \ref{alg:srvr_pg}, we choose the step size $\eta\leq1/(4L)$ and epoch size $m$ and mini-batch size $B$ such that $$B\geq\frac{72m\eta G^2(2G^2/M+1)(W+1)\gamma}{(1-\gamma)^{\CC{2}}}.$$
Then the generalized projected gradient of the output of Algorithm \ref{alg:srvr_pg} satisfies
\begin{align*}
    \EE\big[\big\|\cG_{\eta}\big(\btheta_{\text{out}}\big)\big\|_2^2\big]\leq \frac{8[J(\btheta^*)-J(\btheta_{0})-\textstyle{\ind_{\constrainset}}(\btheta^*)+\textstyle{\ind_{\constrainset}}(\btheta_0)]}{\eta\OindLen\IindLen}+\frac{6\consVarBound^2}{N},
\end{align*}
where  $\btheta^*=\argmax_{\btheta\in\constrainset}J(\btheta)$.
\end{theorem}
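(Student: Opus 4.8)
The plan is to follow the standard template for SPIDER-type analyses adapted to the policy gradient setting with the nonsmooth indicator term and the step-wise importance weights. First I would establish a one-step \emph{descent-type} inequality for the performance function along the projected gradient update in Line~\ref{algline:update_policy}. Since $J$ is $L$-smooth by Proposition~\ref{prop:smooth_obj}(2), and $\btheta_{\Iind+1}^{\Oind+1}=\cP_{\constrainset}(\btheta_{\Iind}^{\Oind+1}+\eta\vb_{\Iind}^{\Oind+1})$, I would use the standard properties of the proximal/projection operator (nonexpansiveness and the firm-nonexpansiveness characterization via the prox inequality) to write, for the generalized gradient step, something of the form
\begin{align*}
J(\btheta_{\Iind+1}^{\Oind+1}) - \ind_{\constrainset}(\btheta_{\Iind+1}^{\Oind+1}) \;\geq\; J(\btheta_{\Iind}^{\Oind+1}) - \ind_{\constrainset}(\btheta_{\Iind}^{\Oind+1}) + c_1\eta\big\|\widetilde{\cG}_{\Iind}^{\Oind+1}\big\|_2^2 - c_2\eta\big\|\vb_{\Iind}^{\Oind+1} - \nabla J(\btheta_{\Iind}^{\Oind+1})\big\|_2^2,
\end{align*}
where $\widetilde{\cG}_{\Iind}^{\Oind+1}$ denotes the stochastic gradient mapping built from $\vb_{\Iind}^{\Oind+1}$ and $c_1,c_2$ are constants coming from the choice $\eta\leq 1/(4L)$. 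This is exactly the step where the nonsmooth term $\ind_{\constrainset}$ is handled cleanly by working with the Moreau-envelope/gradient-mapping formulation.

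Second, and this is the crux, I would bound the accumulated variance of the recursive estimator $\EE\big[\|\vb_{\Iind}^{\Oind+1}-\nabla J(\btheta_{\Iind}^{\Oind+1})\|_2^2\big]$. Using the recursion \eqref{eq:def_semi_gradient} and the unbiasedness identity $\EE_{\tau\sim p(\tau|\btheta_{\Iind}^{\Oind+1})}[\isg(\tau|\btheta_{\Iind-1}^{\Oind+1})]=\EE_{\tau\sim p(\tau|\btheta_{\Iind-1}^{\Oind+1})}[g(\tau|\btheta_{\Iind-1}^{\Oind+1})]$, the cross terms telescope and one gets
\begin{align*}
\EE\big[\big\|\vb_{\Iind}^{\Oind+1}-\nabla J(\btheta_{\Iind}^{\Oind+1})\big\|_2^2\big] \;\leq\; \frac{1}{B}\sum_{\Iind'=1}^{\Iind}\EE\big[\big\|g(\tau|\btheta_{\Iind'}^{\Oind+1}) - \isg(\tau|\btheta_{\Iind'-1}^{\Oind+1}) - \big(\nabla J(\btheta_{\Iind'}^{\Oind+1})-\nabla J(\btheta_{\Iind'-1}^{\Oind+1})\big)\big\|_2^2\big] + \EE\big[\big\|\vb_0^{\Oind+1}-\nabla J(\tbtheta^{\Oind})\big\|_2^2\big].
\end{align*}
The last term is at most $\consVarBound^2/N$ by Assumption~\ref{assump:bounded_variance}. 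For the summands I would show a Lipschitz-type bound $\EE\big[\|g(\tau|\btheta_{\Iind'}^{\Oind+1}) - \isg(\tau|\btheta_{\Iind'-1}^{\Oind+1})\|_2^2\big]\leq C\|\btheta_{\Iind'}^{\Oind+1}-\btheta_{\Iind'-1}^{\Oind+1}\|_2^2$ with $C = O\big(G^2(2G^2/M+1)(W+1)\gamma/(1-\gamma)^2\big)$; this is the technical lemma that controls the variance introduced by the step-wise importance weights, using Assumption~\ref{assump:smooth}, Assumption~\ref{assump:weight_variance}, and bounds on the Rényi divergence of the trajectory distributions, and it is where the specific constant in the theorem's lower bound on $B$ originates. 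Since $\btheta_{\Iind'}^{\Oind+1}-\btheta_{\Iind'-1}^{\Oind+1}$ is (up to projection) $\eta\vb_{\Iind'-1}^{\Oind+1}$, this ties the variance growth back to the squared gradient-mapping norms, and the condition $B\geq 72m\eta G^2(2G^2/M+1)(W+1)\gamma/(1-\gamma)^2$ is precisely what makes the per-epoch variance accumulation (over $m$ inner steps) a small enough fraction of the descent gain $c_1\eta\|\widetilde{\cG}\|^2$ to be absorbed.

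Third, I would telescope the one-step inequality over $\Iind=0,\ldots,m-1$ within each epoch and over $\Oind=0,\ldots,\OindLen-1$ across epochs, noting that the reference policy resets $\btheta_0^{\Oind+1}=\tbtheta^{\Oind}=\btheta_m^{\Oind}$ make the boundary terms chain together, so the sum collapses to $J(\btheta^*) - J(\btheta_0) - \ind_{\constrainset}(\btheta^*) + \ind_{\constrainset}(\btheta_0)$ on the left (after using $J(\btheta^*)\geq J(\btheta_{\mathrm{out}})$ on all iterates). After substituting the variance bound and using $\eta\leq 1/(4L)$ to fix the numerical constants, dividing by $\eta\OindLen\IindLen$ and using that $\btheta_{\mathrm{out}}$ is sampled uniformly from all $\OindLen\IindLen$ iterates yields $\EE[\|\cG_\eta(\btheta_{\mathrm{out}})\|_2^2]$ on the left; a final step replaces the stochastic gradient mapping $\widetilde{\cG}$ by the true gradient mapping $\cG_\eta$ at the cost of an extra $\EE[\|\vb-\nabla J\|^2]$ term, which is again controlled by $\consVarBound^2/N$, producing the stated $8[\cdots]/(\eta\OindLen\IindLen) + 6\consVarBound^2/N$ bound. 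The main obstacle is unquestionably the second step: getting the importance-weighted Lipschitz variance bound with a constant that has the right dependence on $1/(1-\gamma)$ and $W$, since the step-wise importance weight $\omega_{0:h}$ is a product over the horizon and naively bounding its variance would blow up exponentially in $H$; the resolution requires carefully exploiting the discount factor $\gamma^h$ in the GPOMDP estimator together with the bounded-variance assumption on the single-step ratio, and this is the place where the analysis genuinely departs from the finite-sum SPIDER/SARAH arguments.
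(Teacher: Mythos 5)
Your proposal follows essentially the same route as the paper's proof: the one-step inequality via $L$-smoothness of $J$ and the prox optimality conditions for the stochastic gradient mapping $\tilde\cG_{\Iind}^{\Oind+1}$, the martingale variance recursion for the recursive estimator combined with an importance-weighted Lipschitz bound (the paper's Lemma \ref{lemma:importance_sampling_variance}, proved by bounding the Hessian of the exponentiated R\'enyi divergence), and the inner/outer telescoping with the condition on $B$ chosen to absorb the accumulated $\sum_l\|\btheta_l^{\Oind+1}-\btheta_{l-1}^{\Oind+1}\|_2^2$ terms. The only cosmetic differences are that the paper performs the replacement of $\tilde\cG_{\Iind}^{\Oind+1}$ by $\cG_\eta(\btheta_{\Iind}^{\Oind+1})$ inside the one-step inequality rather than at the very end, and the raw Lipschitz-variance constant is $C_\gamma=O(1/(1-\gamma)^5)$, which only becomes the $O(1/(1-\gamma)^2)$ you quote after dividing by $L$ in the condition on $B$.
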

\begin{remark}
Theorem \ref{thm:convergence_svrpg} states that under a proper choice of step size, batch size and epoch length, the expected squared gradient norm of the performance function at the output of $\spiderAlg$ is in the order of 
$$O\bigg(\frac{1}{Sm}+\frac{1}{N}\bigg).$$
Recall that $S$ is the number of epochs and $m$ is the epoch length of $\spiderAlg$, so $Sm$ is the total number of iterations of $\spiderAlg$. Thus the first term $O(1/(Sm))$ characterizes the convergence rate of $\spiderAlg$. The second term $O(1/N)$ comes from the variance of the stochastic gradient used in the outer loop, where $N$ is the batch size used in the snapshot gradient $\vb_0^{\Oind+1}$ in Line~\ref{algline:fullgrad} of $\spiderAlg$. Compared with the $O(1/(Sm)+1/N+1/B)$ convergence rate in \cite{papini2018stochastic}, our analysis avoids the additional term $O(1/B)$ that depends on the mini-batch size within each epoch.

Compared with \cite{xu2019improved}, our mini-batch size $B$ is independent of the horizon length $H$. This enables us to choose a smaller mini-batch size $B$ while maintaining the same convergence rate. As we will show in the next corollary, this improvement leads to a lower sample complexity.
\end{remark}
\begin{corollary}\label{coro:gradient_complexity}
Suppose the same conditions as in Theorem \ref{thm:convergence_svrpg} hold. Set step size as $\eta=1/(4L)$, the batch size parameters as $N=O(1/\epsilon)$ and $B=O(1/\epsilon^{1/2})$ respectively, epoch length as $m=O(1/\epsilon^{1/2})$ and the number of epochs as $S=O(1/\epsilon^{1/2})$. Then Algorithm \ref{alg:srvr_pg} outputs a point $\btheta_{\text{out}}$ that satisfies $\EE[\| \cG_{\eta}(\btheta_{\text{out}})\|_2^2]\leq\epsilon$ within $O(1/\epsilon^{3/2})$
trajectories in total.
\end{corollary}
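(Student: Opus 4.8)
The plan is to substitute the stated parameter values into the guarantee of Theorem~\ref{thm:convergence_svrpg} and then count the trajectories sampled by $\spiderAlg$. Since $\eta = 1/(4\consGradLip)$ is a fixed constant, Theorem~\ref{thm:convergence_svrpg} reads $\EE[\|\cG_\eta(\btheta_{\text{out}})\|_2^2] \le 8\Delta/(\eta S m) + 6\consVarBound^2/N$, where $\Delta := J(\btheta^*) - J(\btheta_0) - \ind_{\constrainset}(\btheta^*) + \ind_{\constrainset}(\btheta_0)$; this is a finite constant because $\btheta_0,\btheta^*\in\constrainset$ makes both indicator terms vanish and bounded rewards give $|J(\btheta)|\le R/(1-\gamma)$, so $0\le\Delta\le 2R/(1-\gamma)$. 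To make the left-hand side at most $\epsilon$ it suffices to force each summand below $\epsilon/2$: the term $6\consVarBound^2/N$ requires $N\ge 12\consVarBound^2/\epsilon$, i.e.\ $N = O(1/\epsilon)$; the term $8\Delta/(\eta S m)$ requires the total number of inner iterations to satisfy $Sm\ge 16\Delta/(\eta\epsilon)$, i.e.\ $Sm = O(1/\epsilon)$.

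Next I would check that the remaining choices are mutually consistent and then optimize the split of the iteration budget between $S$ and $m$. Theorem~\ref{thm:convergence_svrpg} requires $B\ge 72 m\eta G^2(2G^2/M+1)(W+1)\gamma/(1-\gamma)^2$; since $\eta,G,M,W,\gamma$ are all constants this amounts to $B\ge c\,m$ for an explicit $c$, so it suffices to take $m$ and $B$ of the same order with $B$'s hidden constant at least $c$ times $m$'s. It then remains to pick $S$ and $m$ with $Sm = \Theta(1/\epsilon)$ so as to minimize the total trajectory count, which (see the next paragraph) is $\Theta(SN + SmB) = \Theta(S/\epsilon + Sm^2)$ after using $N=\Theta(1/\epsilon)$ and $B=\Theta(m)$. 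Substituting $S = \Theta(1/(m\epsilon))$ turns this into $\Theta(1/(m\epsilon^2) + m/\epsilon)$, which is minimized at $m = \Theta(1/\epsilon^{1/2})$; hence $S = \Theta(1/\epsilon^{1/2})$ and $B = \Theta(1/\epsilon^{1/2})$, exactly the orders in the statement, and the total comes out to $\Theta(1/\epsilon^{3/2})$.

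Finally I would spell out the trajectory count explicitly. In the $s$-th epoch the algorithm draws $N$ trajectories to form the snapshot gradient $\vb_0^{s+1}$ on Line~\ref{algline:fullgrad}, and then $B$ trajectories in each of the $m-1$ inner iterations, where the same $B$ trajectories are reused for both $g(\tau_j|\btheta_t^{s+1})$ and $g_\omega(\tau_j|\btheta_{t-1}^{s+1})$ in the recursive estimator~\eqref{eq:def_semi_gradient}, so the control variate incurs no additional samples. Thus the per-epoch cost is $N + (m-1)B \le N + mB$, and over $S$ epochs the total is $S(N + mB) = SN + SmB = O(\epsilon^{-1/2}\cdot\epsilon^{-1}) + O(\epsilon^{-1/2}\cdot\epsilon^{-1/2}\cdot\epsilon^{-1/2}) = O(1/\epsilon^{3/2})$, as claimed. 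I do not expect any serious obstacle: the argument is essentially substitution and arithmetic, and the only mildly non-mechanical point is the budget-split optimization above, together with the bookkeeping of keeping the hidden constants in $N, B, m, S$ in a consistent order so that the constraint on $B$ and both $\epsilon/2$ targets hold simultaneously.
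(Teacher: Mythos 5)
Your proposal is correct and follows essentially the same route as the paper's own proof: set each of the two terms in Theorem \ref{thm:convergence_svrpg} to $\epsilon/2$ to get $N=O(1/\epsilon)$ and $Sm=O(1/\epsilon)$, take $B=\Theta(m)$ to satisfy the theorem's mini-batch condition, and count $SN+SmB$ trajectories, optimizing the split to get $m=B=S=\Theta(1/\epsilon^{1/2})$ and a total of $O(1/\epsilon^{3/2})$. Your write-up is slightly more explicit than the paper's (bounding $\Delta$, justifying that the control variate reuses the same $B$ trajectories, and carrying out the budget-split minimization), but there is no substantive difference in approach.
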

Note that the results in \cite{papini2018stochastic,xu2019improved} are for $\|\nabla_{\btheta} J(\btheta)\|_2^2\leq\epsilon$, while our result in Corollary \ref{coro:gradient_complexity} is more general. In particular, when the policy parameter $\btheta$ is defined on the whole space $\RR^d$ instead of $\constrainset$, our result reduces to the case for $\|\nabla_{\btheta} J(\btheta)\|_2^2\leq\epsilon$ since $\constrainset=\RR^d$ and $\cG_{\eta}(\btheta)=\nabla_{\btheta} J(\btheta)$. In \cite{xu2019improved}, the authors improved the sample complexity of SVRPG \citep{papini2018stochastic} from $O(1/\epsilon^2)$ to $O(1/\epsilon^{5/3})$ by a  sharper analysis. According to Corollary \ref{coro:gradient_complexity}, $\spiderAlg$ only needs $O(1/\epsilon^{3/2})$ number of trajectories to achieve $\|\nabla_{\btheta} J(\btheta)\|_2^2\leq\epsilon$, which is lower than the sample complexity of SVRPG by a factor of $O(1/\epsilon^{1/6})$. This improvement is more pronounced when the required precision $\epsilon$ is very small.

\subsection{Implication for Gaussian Policy}
Now, we consider the Gaussian policy model and present the sample complexity of $\spiderAlg$ in this setting. For bounded action space $\cA\subset\RR$, a Gaussian policy parameterized by $\btheta$ is defined as
\begin{align}\label{eq:def_GaussianPolicy}
    \pi_{\btheta}(a|s)=\frac{1}{\sqrt{2\pi}}\exp\bigg(-\frac{(\btheta^{\top}\phi(s)-a)^2}{2\sigma^2}\bigg),
\end{align}
where $\sigma^2$ is a fixed standard deviation parameter and $\phi:\cS\mapsto\RR^d$ is a mapping from the state space to the feature space. For Gaussian policy, under the mild condition that the actions and the state feature vectors are bounded, we can verify that Assumptions \ref{assump:smooth} and \ref{assump:bounded_variance} hold, which can be found in Appendix \ref{sec:append_gaussian}. It is worth noting that Assumption \ref{assump:weight_variance} does not hold trivially for all Gaussian distributions. In particular, \citet{cortes2010learning} showed that for two Gaussian distributions $\pi_{\btheta_1}(a|s)\sim N(\mu_1,\sigma_1^2)$ and $\pi_{\btheta_2}(a|s)\sim N(\mu_2,\sigma_2^2)$, if $\sigma_2>\sqrt{2}/2\sigma_1$, then the variance of $\omega(\tau|\btheta_1,\btheta_2)$ is bounded. For our Gaussian policy defined in \eqref{eq:def_GaussianPolicy} where the standard deviation $\sigma^2$ is fixed, we have $\sigma>\sqrt{2}/2\sigma$ trivially hold, and therefore Assumption \ref{assump:weight_variance} holds for some finite constant $W>0$ according to \eqref{eq:def_distribution_tau}.

Recall that Theorem \ref{thm:convergence_svrpg} holds for any general models under Assumptions \ref{assump:smooth}, \ref{assump:bounded_variance} and \ref{assump:weight_variance}. Based on the above arguments, we know that the convergence analysis in Theorem \ref{thm:convergence_svrpg} applies to Gaussian policy. In the following corollary, we present the sample complexity of Algorithm \ref{alg:srvr_pg} for Gaussian policy with detailed dependency on precision parameter $\epsilon$, horizon size $H$ and the discount factor $\gamma$. 
\begin{corollary}\label{coro:gradient_complexity_gaussian}
Given the Gaussian policy defined in \eqref{eq:def_GaussianPolicy}, suppose Assumption \ref{assump:weight_variance} holds and we have $|a|\leq C_a$ for all $a\in\cA$ and $\|\phi(s)\|_2\leq M_{\phi}$ for all $s\in\cS$, where $C_a,M_{\phi}>0$ are constants. If we set step size as $\eta=O((1-\gamma)^{\CC{3}})$, the mini-batch sizes and epoch length as $N=O((1-\gamma)^{-3}\epsilon^{-1})$, $B=O((1-\gamma)^{-\CC{1}}\epsilon^{-1/2})$ and $m=O((1-\gamma)^{-\CC{2}}\epsilon^{-1/2})$, then the output of Algorithm \ref{alg:srvr_pg} satisfies $\EE[\|\cG_{\eta}(\btheta_{\text{out}})\|_2^2]\leq\epsilon$ after $O(1/((1-\gamma)^{4}\epsilon^{3/2}))$
trajectories in total.
\end{corollary}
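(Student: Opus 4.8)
The plan is to derive this as a specialization of Theorem \ref{thm:convergence_svrpg} (analyzed exactly as in Corollary \ref{coro:gradient_complexity}) to the Gaussian model, the only extra work being to carry the dependence of every problem constant on the discount factor $\gamma$ through the bounds. First I would pin down the Gaussian-policy constants: for $\pi_{\btheta}$ in \eqref{eq:def_GaussianPolicy}, $\nabla_{\btheta}\log\pi_{\btheta}(a|s)=(a-\btheta^{\top}\phi(s))\phi(s)/\sigma^{2}$ and $\nabla_{\btheta}^{2}\log\pi_{\btheta}(a|s)=-\phi(s)\phi(s)^{\top}/\sigma^{2}$, so from $|a|\le C_{a}$, $\|\phi(s)\|_{2}\le M_{\phi}$ and boundedness of the iterates over $\constrainset$ (detailed in Appendix \ref{sec:append_gaussian}) Assumption \ref{assump:smooth} holds with $\consLip,\consHes$ depending only on $C_{a},M_{\phi},\sigma$, hence independent of $H$ and $\gamma$. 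The same appendix computation together with $|r|\le R$ shows Assumption \ref{assump:bounded_variance} holds for the GPOMDP estimator with $\consVarBound^{2}=O\big((1-\gamma)^{-3}\big)$, and Assumption \ref{assump:weight_variance} holds with a finite $\consVarWeight$ by the argument in the paragraph preceding the corollary.

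Next I would feed these into Proposition \ref{prop:smooth_obj}: the smoothness parameter is $\consGradLip=\consHes R/(1-\gamma)^{2}+2\consLip^{2}R/(1-\gamma)^{3}=O\big((1-\gamma)^{-3}\big)$ and $\consGrad=\consLip R/(1-\gamma)^{2}=O\big((1-\gamma)^{-2}\big)$. Taking $\eta=1/(4\consGradLip)=\Theta\big((1-\gamma)^{3}\big)$ satisfies the step-size requirement $\eta\le1/(4L)$ of Theorem \ref{thm:convergence_svrpg}. With $\consLip,\consHes,\consVarWeight=O(1)$, $\gamma<1$ and $m=O\big((1-\gamma)^{-2}\epsilon^{-1/2}\big)$, the right-hand side of the batch-size requirement $B\ge 72m\eta\consLip^{2}(2\consLip^{2}/\consHes+1)(\consVarWeight+1)\gamma/(1-\gamma)^{2}$ is of order $m\eta/(1-\gamma)^{2}=O\big((1-\gamma)^{-1}\epsilon^{-1/2}\big)$, so the prescribed $B=O\big((1-\gamma)^{-1}\epsilon^{-1/2}\big)$ is admissible.

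It then remains to balance the two terms of the bound of Theorem \ref{thm:convergence_svrpg}. Since $\btheta_{0},\btheta^{*}\in\constrainset$, the indicator terms cancel and the bound reads $\EE[\|\cG_{\eta}(\btheta_{\text{out}})\|_{2}^{2}]\le 8\big(J(\btheta^{*})-J(\btheta_{0})\big)/(\eta\OindLen m)+6\consVarBound^{2}/N$. Because $\eta m=\Theta\big((1-\gamma)\epsilon^{-1/2}\big)$ and $J(\btheta^{*})-J(\btheta_{0})$ is bounded, choosing the number of epochs $\OindLen=O\big((1-\gamma)^{-1}\epsilon^{-1/2}\big)$ makes the first term at most $\epsilon/2$; since $\consVarBound^{2}=O\big((1-\gamma)^{-3}\big)$, choosing $N=O\big((1-\gamma)^{-3}\epsilon^{-1}\big)$ (the stated value) makes the second term at most $\epsilon/2$, so $\EE[\|\cG_{\eta}(\btheta_{\text{out}})\|_{2}^{2}]\le\epsilon$. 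Finally, each epoch of Algorithm \ref{alg:srvr_pg} uses $N$ trajectories for the snapshot gradient in Line~\ref{algline:fullgrad} plus $B$ trajectories in each of the $m-1$ inner iterations, i.e. $N+(m-1)B=O\big((1-\gamma)^{-3}\epsilon^{-1}\big)$ since $mB=O\big((1-\gamma)^{-3}\epsilon^{-1}\big)$ too; multiplying by $\OindLen=O\big((1-\gamma)^{-1}\epsilon^{-1/2}\big)$ epochs gives the claimed $O\big(1/((1-\gamma)^{4}\epsilon^{3/2})\big)$ total trajectory complexity.

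The step I expect to be the real obstacle is the sharp, horizon-free variance bound $\consVarBound^{2}=O\big((1-\gamma)^{-3}\big)$ invoked above. The crude estimate $\consVarBound^{2}\le\consGrad^{2}=O\big((1-\gamma)^{-4}\big)$ implied by Proposition \ref{prop:smooth_obj}(3) would force $N=O\big((1-\gamma)^{-4}\epsilon^{-1}\big)$ and weaken the final complexity to $O\big(1/((1-\gamma)^{5}\epsilon^{3/2})\big)$; recovering the extra factor of $(1-\gamma)$ requires working directly with the GPOMDP estimator for the Gaussian policy and using $\EE[\nabla_{\btheta}\log\pi_{\btheta}(a_{h}|s_{h})\mid s_{h}]=0$ so that the cross terms in the second-moment expansion of $g(\tau|\btheta)$ do not pick up a second factor of the effective horizon $1/(1-\gamma)$. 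Everything else is routine tracking of $\gamma$-dependence through Proposition \ref{prop:smooth_obj} and Theorem \ref{thm:convergence_svrpg}.
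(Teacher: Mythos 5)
Your proposal is correct and follows essentially the same route as the paper: specialize Theorem \ref{thm:convergence_svrpg} with the Gaussian-policy constants $\consLip=C_aM_\phi/\sigma^2$, $\consHes=M_\phi^2/\sigma^2$, take $\eta=\Theta((1-\gamma)^3)$ from $L=O((1-\gamma)^{-3})$, balance the two error terms to fix $N$ and $\OindLen$, and set $mB=N$ subject to the mini-batch constraint to get the stated complexity. The one step you flag as the ``real obstacle''---the horizon-free bound $\consVarBound^2=O((1-\gamma)^{-3})$ rather than the crude $\consGrad^2=O((1-\gamma)^{-4})$---is exactly what the paper imports as Lemma \ref{lemma:variance_grad_gaussian} (Lemma 5.5 of \citet{pirotta2013adaptive}), and the mechanism you identify for recovering the extra factor of $(1-\gamma)$ is the right one.
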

\begin{remark}
For Gaussian policy, the number of trajectories Algorithm \ref{alg:srvr_pg} needs to find an $\epsilon$-approximate stationary point, i.e., $\EE[\|\cG_{\eta}(\btheta_{\text{out}})\|_2^2]\leq\epsilon$, is also in the order of $O(\epsilon^{-3/2})$, which is faster than PGT and SVRPG. Additionally, we explicitly show that the sample complexity does not depend on the horizon $H$, which is in sharp contrast with the results in \cite{papini2018stochastic,xu2019improved}. The dependence on $1/(1-\gamma)$ comes from the variance of PGT estimator. 
\end{remark}

\section{Experiments}\label{sec:experiments}

\begin{figure}
    \begin{center}
    \subfigure[Cartpole]{
    \includegraphics[scale=0.25]{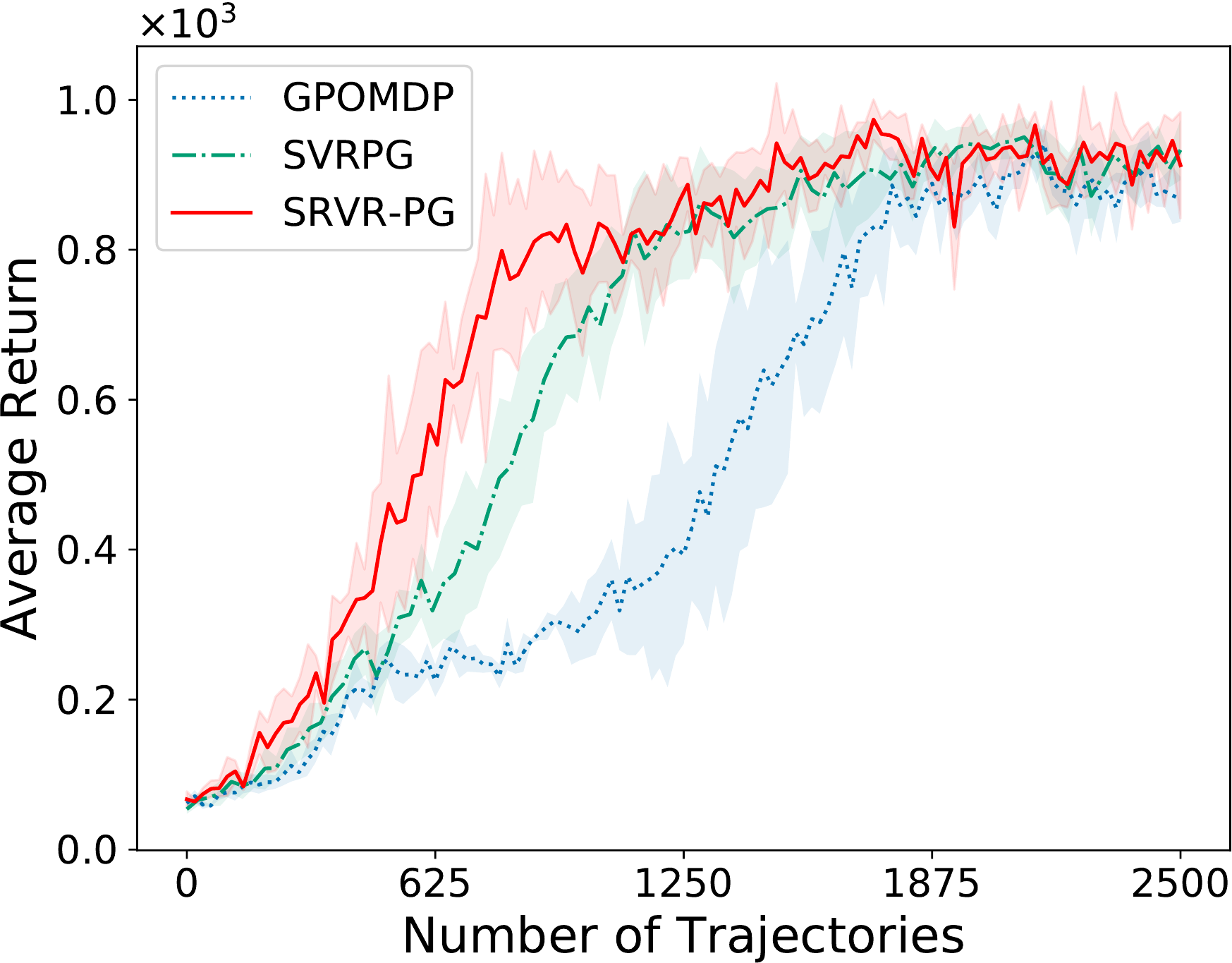}\label{fig:cartpole_baselines}}
    \subfigure[Mountain Car]{
    \includegraphics[scale=0.25]{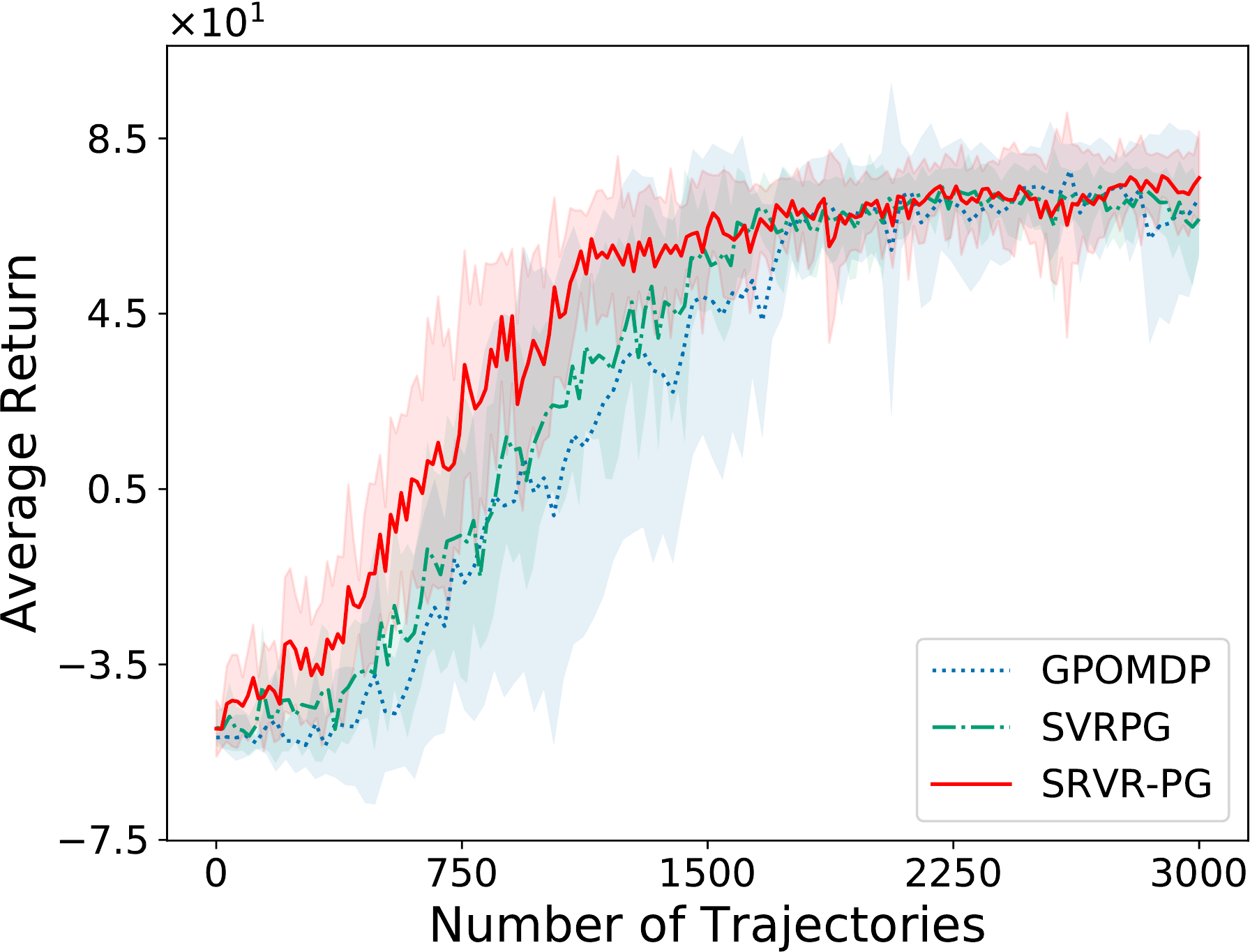}\label{fig:mc_baselines}}
    \subfigure[Pendulum]{
    \includegraphics[scale=0.25]{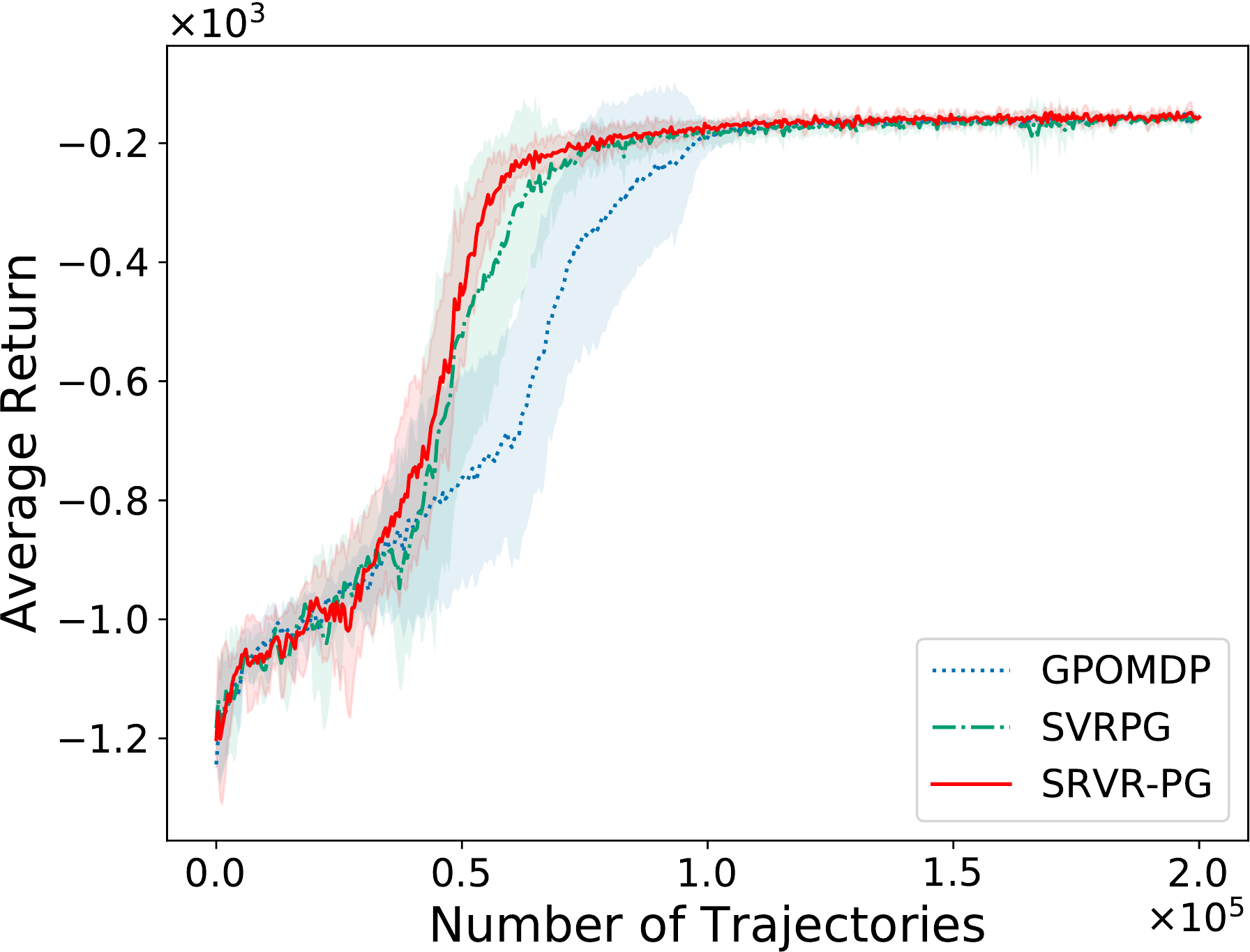}\label{fig:pend_baselines}}    

    \subfigure[Cartpole]{
    \includegraphics[scale=0.25]{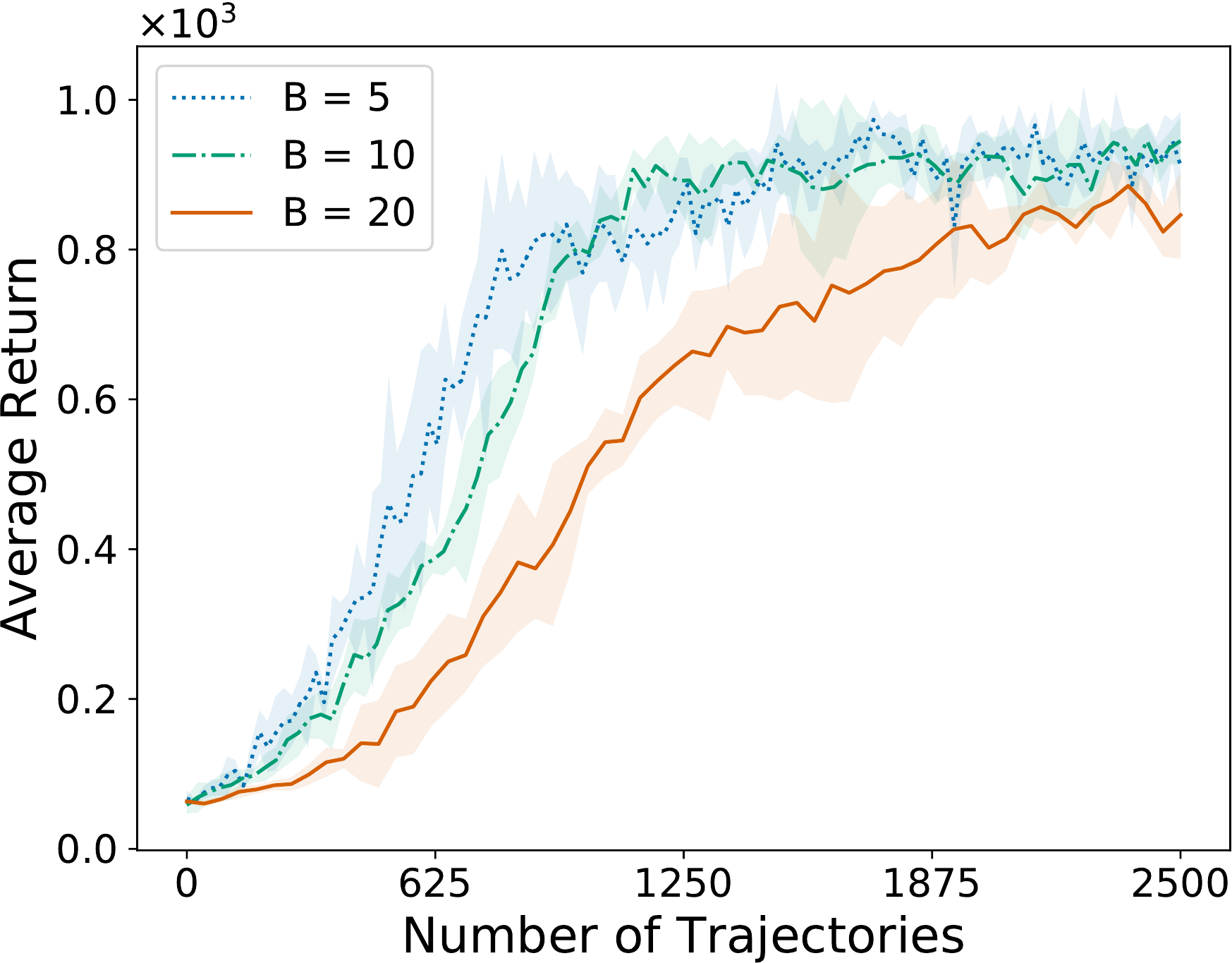}\label{fig:carpole_batchsize}}
    \subfigure[Mountain Car]{
    \includegraphics[scale=0.25]{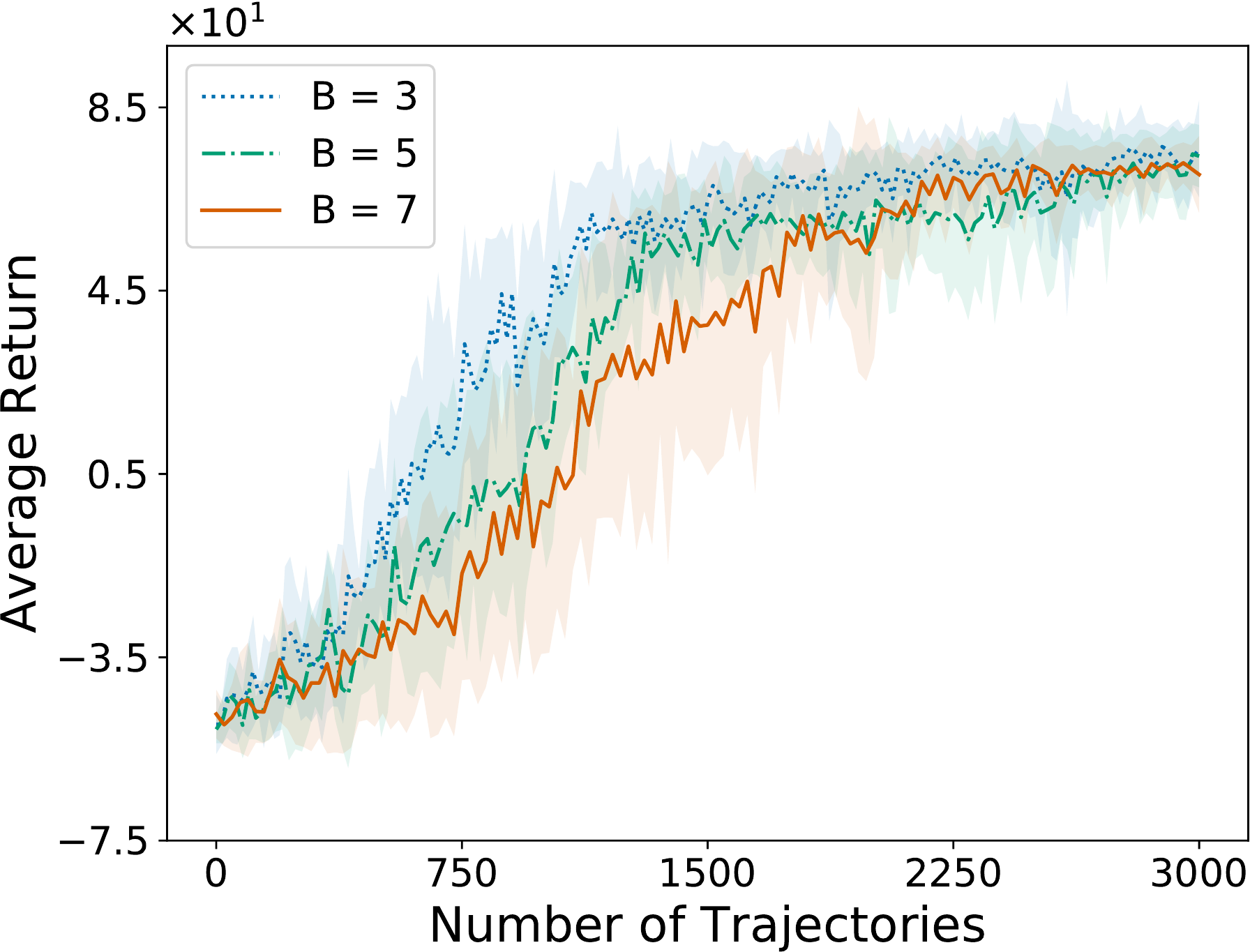}\label{fig:mc_batchsize}}  
    \subfigure[Pendulum]{
    \includegraphics[scale=0.25]{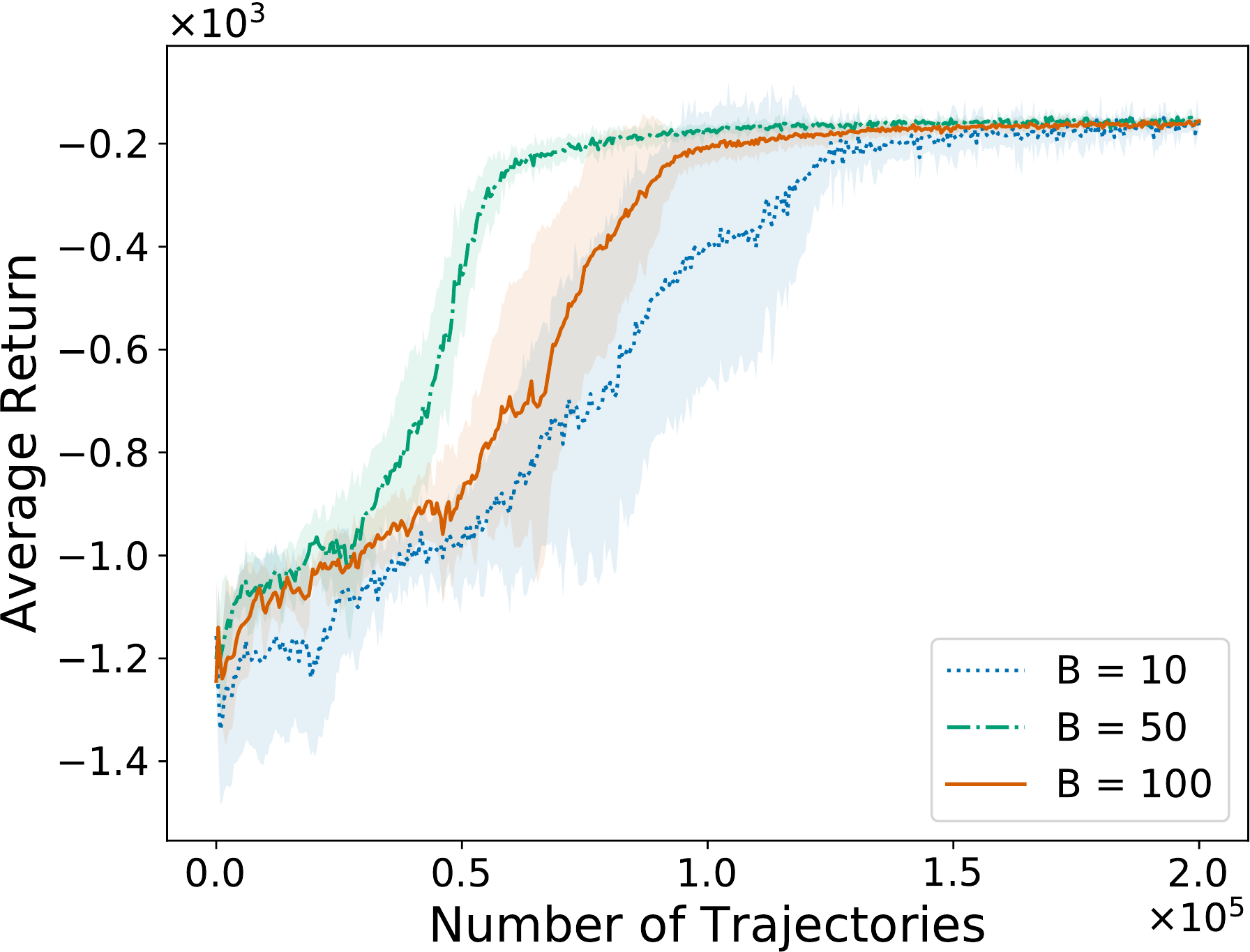}\label{fig:pen_batchsize}} 
    \caption{(a)-(c): Comparison of different algorithms. Experimental results are averaged over $10$ repetitions. (d)-(f): Comparison of different batch size $B$ on the performance of $\spiderAlg$.}
    \end{center}
    \label{fig:comparison_seedpg}
\end{figure}

In this section, we provide experiment results of the proposed algorithm on benchmark reinforcement learning environments including the Cartpole, Mountain Car and Pendulum problems. In all the experiments, we use the Gaussian policy defined in \eqref{eq:def_GaussianPolicy}. {In addition, we found that the proposed algorithm works well without the extra projection step. Therefore, we did not use projection in our experiments.} For baselines, we compare the proposed $\spiderAlg$ algorithm with the most relevant methods: GPOMDP \citep{baxter2001infinite} and SVRPG \citep{papini2018stochastic}. For the learning rates $\eta$ in all of our experiments, we use grid search to directly tune $\eta$. For instance, we searched $\eta$ for the Cartpole problem by evenly dividing the interval $[10^{-5},10^{-1}]$ into $20$ points in the log-space. For the batch size parameters $N$ and $B$ and the epoch length $m$, according to Corollary 4.7, we choose $N=O(1/\epsilon)$, $B=O(1/\epsilon^{1/2})$ and thus $m=O(1/\epsilon^{1/2})$, where $\epsilon>0$ is a user-defined precision parameter. In our experiments, we set $N=C_0/\epsilon$, $B=C_1/\epsilon^{1/2}$ and $m=C_2/\epsilon^{1/2}$ and tune the constant parameters $C_0, C_1, C_2$ using grid search. The detailed parameters used in the experiments are presented in Appendix \ref{sec:additional_exp_detail}. 

We evaluate the performance of different algorithms in terms of the total number of trajectories they require to achieve a certain threshold of cumulative rewards. 
We run each experiment repeatedly for $10$ times and plot the averaged returns with standard deviation. For a given environment, all experiments are initialized from the same random initialization. 
Figures \ref{fig:cartpole_baselines}, \ref{fig:mc_baselines} and \ref{fig:pend_baselines} show the results on the comparison of GPOMDP, SVRPG, and our proposed $\spiderAlg$ algorithm across three different RL environments. 
It is evident that, for all environments, GPOMDP is overshadowed by the variance reduced algorithms SVRPG and $\spiderAlg$ significantly. Furthermore, $\spiderAlg$ outperforms SVRPG in all experiments, which is consistent with the comparison on the sample complexity  of GPOMDP, SVRPG and $\spiderAlg$ in Table \ref{table:complexity}. 


Corollaries \ref{coro:gradient_complexity} and \ref{coro:gradient_complexity_gaussian} suggest that when the mini-batch size $B$ is in the order of $O(\sqrt{N})$, $\spiderAlg$ achieves the best performance. Here $N$ is the number of episodes sampled in the outer loop of Algorithm \ref{alg:srvr_pg} and $B$ is the number of episodes sampled at each inner loop iteration. To validate our theoretical result, we conduct a sensitivity study to demonstrate the effectiveness of different batch sizes within each epoch of $\spiderAlg$ on its performance. 
The results on different environments are displayed in Figures \ref{fig:carpole_batchsize}, \ref{fig:mc_batchsize} and \ref{fig:pen_batchsize} respectively. To interpret these results, we take the Pendulum problem as an example. In this setting, we choose outer loop batch size $N$ of Algorithm \ref{alg:srvr_pg} to be $N=250$. By Corollary \ref{coro:gradient_complexity_gaussian}, the optimal choice of batch size in the inner loop of Algorithm \ref{alg:srvr_pg} is $B=C\sqrt{N}$, where $C>1$ is a constant depending on horizon $H$ and discount factor $\gamma$. Figure \ref{fig:pen_batchsize} shows that $B=50\approx 3\sqrt{N}$ yields the best convergence results for $\spiderAlg$ on Pendulum, which validates our theoretical analysis and implies that a larger batch size $B$ does not necessarily result in an improvement in sample complexity, as each update requires more trajectories, but a smaller batch size $B$ pushes $\spiderAlg$ to behave more similar to GPOMDP. Moreover, by comparing with the outer loop batch size $N$ presented in Table \ref{table:parameters} for $\spiderAlg$ in Cartpole and Mountain Car environments, we found that the results in Figures \ref{fig:carpole_batchsize} and \ref{fig:mc_batchsize} are again in alignment with our theory. Due to the space limit, additional experiment results are included in Appendix \ref{sec:additional_exp_detail}.



\section{Conclusions}
We propose a novel policy gradient method called $\spiderAlg$, which is built on a recursively updated stochastic policy gradient estimator. We prove that the sample complexity of $\spiderAlg$ is lower than the sample complexity of the state-of-the-art SVRPG \citep{papini2018stochastic,xu2019improved} algorithm. We also extend the new variance reduction technique to policy gradient with parameter-based exploration and propose the $\spiderAlg$-PE algorithm, which outperforms the original PGPE algorithm both in theory and practice. Experiments on the classic reinforcement learning benchmarks validate the advantage of our proposed algorithms.

\subsubsection*{Acknowledgments}
We would like to thank the anonymous reviewers for their helpful comments. \CC{We would also like to thank Rui Yuan for pointing out an error on the calculation of the smoothness parameter for the performance function in the previous version.} This research was sponsored in part by the National Science Foundation IIS-1904183, IIS-1906169 and Adobe Data Science Research Award. The views and conclusions contained in this paper are those of the authors and should not be interpreted as representing any funding agencies.

\bibliography{RL,optimization}
\bibliographystyle{iclr2020_conference}

\appendix
\section{Extension to Parameter-based Exploration}\label{sec:extend_pgpe}
Although $\spiderAlg$ is proposed for action-based policy gradient, it can be easily extended to the policy gradient algorithm with parameter-based exploration (PGPE) \citep{sehnke2008policy}. Unlike action-based policy gradient in previous sections, PGPE does not directly optimize the policy parameter $\btheta$ but instead assumes that it follows a prior distribution with hyper-parameter $\bbrho$: $\btheta\sim p(\btheta|\bbrho)$. The expected return under the policy induced by the hyper-parameter $\bbrho$ is formulated as follows\footnote{We slightly abuse the notation by overloading $J$ as the performance function defined on the hyper-parameter $\bbrho$.} 
\begin{align}
   J(\bbrho)=\int\int p(\btheta|\bbrho)p(\tau|\btheta)\cR(\tau)\dd \tau\dd\btheta.
\end{align}
PGPE aims to find the hyper-parameter $\bbrho^*$ that maximizes the performance function $J(\bbrho)$. Since $p(\btheta|\bbrho)$ is stochastic and can provide sufficient exploration, we can choose $\pi_{\btheta}(a|s)=\delta(a-\mu_{\btheta}(s))$ to be a deterministic policy, where $\delta$ is the Dirac delta function and $\mu_{\btheta}(\cdot)$ is a deterministic function. For instance, a linear deterministic policy is defined as $\pi_{\btheta}(a|s)=\delta(a-\btheta^{\top}s)$ \citep{zhao2011analysis,metelli2018policy}. Given the policy parameter $\btheta$, a trajectory $\tau$ is only decided by the initial state distribution and the transition probability. Therefore, PGPE is called a parameter-based exploration approach. Similar to the action-based policy gradient methods, we can apply gradient ascent to find $\bbrho^*$. In the $k$-th iteration, we update $\bbrho_k$ by $\bbrho_{k+1}=\bbrho_{k}+\eta\nabla_{\bbrho} J(\bbrho)$. The exact gradient of $J(\bbrho)$ with respect to $\bbrho$ is given by
\begin{align*}
  \nabla_{\bbrho} J(\bbrho)=\int\int p(\btheta|\bbrho)p(\tau|\btheta)\nabla_{\bbrho}\log p(\btheta|\bbrho)\cR(\tau)\dd \tau\dd\btheta.
\end{align*}
To approximate $\nabla_{\bbrho} J(\bbrho)$, we first sample $N$ policy parameters $\{\btheta_i\}$ from $p(\btheta|\bbrho)$. Then we sample one trajectory $\tau_i$ for each $\btheta_i$ and use the following empirical average to approximate $\nabla_{\bbrho}J(\bbrho)$
\begin{align}\label{eq:def_pgpe_approx}
  \hat\nabla_{\bbrho} J(\bbrho)=\frac{1}{N}\sum_{i=1}^{N}\nabla_{\bbrho}\log p(\btheta_i|\bbrho)\sum_{h=0}^{H}\gamma^h r(s_h^i,a_h^i):=\frac{1}{N}\sum_{i=1}^{N}g(\tau_i|\bbrho),
\end{align}
where $\gamma\in[0,1)$ is the discount factor. Compared with the PGT/GPOMDP estimator in Section \ref{sec:preliminary}, the likelihood term $\nabla_{\bbrho}\log p(\btheta_i|\bbrho)$ in \eqref{eq:def_pgpe_approx} for PGPE is independent of horizon $H$.

Algorithm \ref{alg:srvr_pg} can be directly applied to the PGPE setting, where we replace the policy parameter $\btheta$ with the hyper-parameter $\bbrho$. When we need to sample $N$ trajectories, we first sample $N$ policy parameters $\{\btheta_i\}$ from $p(\btheta|\bbrho)$. Since the policy is deterministic with given $\btheta_i$, we sample one trajectory $\tau_i$ from each policy $p(\tau|\btheta_i)$. The recursive semi-stochastic gradient is given by
\begin{align}\label{eq:def_semi_gradient_pgpe}
    \vb_{\Iind}^{\Oind+1}
    = \frac{1}{B} \sum^{B}_{j=1} g(\tau_j|\bbrho_{\Iind}^{\Oind+1})-\frac{1}{B} \sum^{B}_{j=1}  g_{\omega}(\tau_j|\bbrho^{\Oind+1}_{\Iind-1})  +\vb_{\Iind-1}^{\Oind+1}, 
\end{align}
where $g_{\omega}(\tau_j|\bbrho^{\Oind+1}_{\Iind-1})$ is the gradient estimator with step-wise importance weight defined in the way as in \eqref{eq:def_stepwise_is_gradient}. We call this variance reduced parameter-based algorithm $\spiderAlg$-PE, which is displayed in Algorithm \ref{alg:srvr_pgpe}. 

Under similar assumptions on the parameter distribution $p(\btheta|\bbrho)$, as Assumptions \ref{assump:smooth}, \ref{assump:bounded_variance} and \ref{assump:weight_variance}, we can easily prove that $\spiderAlg$-PE converges to a stationary point of $J(\bbrho)$ with $O(1/\epsilon^{3/2})$ sample complexity. 
In particular, we assume the policy parameter $\btheta$ follows the distribution $p(\btheta|\bbrho)$ and we update our estimation of $\bbrho$ based on the semi-stochastic gradient in \eqref{eq:def_semi_gradient_pgpe}. Recall the gradient $\hat\nabla_{\bbrho}J(\bbrho)$ derived in \eqref{eq:def_pgpe_approx}. Since the policy in $\spiderAlg$-PE is deterministic, we only need to make the boundedness assumption on $p(\btheta|\bbrho)$. In particular, we assume that 
\begin{enumerate}
    \item $\|\nabla_{\bbrho}\log p(\btheta|\bbrho)\|_2$ and $\|\nabla_{\bbrho}^2\log p(\btheta|\bbrho)\|_2$ are bounded by constants in a similar way to Assumption \ref{assump:smooth};
    \item the gradient estimator $g(\tau|\bbrho)=\nabla_{\bbrho}\log p(\btheta|\bbrho)\sum_{h=0}^{H}\gamma^h r(s_h,a_h)$ has bounded variance;
    \item 
    and the importance weight  $\omega(\tau_j|\bbrho_{\Iind-1}^{\Oind+1},\bbrho_{\Iind}^{\Oind+1})=p(\btheta_j|\bbrho_{\Iind-1}^{\Oind+1})/p(\btheta_j|\bbrho_{\Iind}^{\Oind+1})$ has bounded variance in a similar way to Assumption \ref{assump:weight_variance}.
\end{enumerate}
Then the same gradient complexity $O(1/\epsilon^{3/2})$ for $\spiderAlg$-PE can be proved in the same way as the proof of Theorem \ref{thm:convergence_svrpg} and Corollary \ref{coro:gradient_complexity}. Since the analysis is almost the same as that of $\spiderAlg$, we omit the proof of the convergence of $\spiderAlg$-PE. In fact, according to the analysis in \cite{zhao2011analysis,metelli2018policy}, all the three assumptions listed above can be easily verified under a Gaussian prior for $\btheta$ and a linear deterministic policy.

\begin{algorithm}[ht]
\caption{Stochastic Recursive Variance Reduced Policy Gradient with Parameter-based Exploration (\spiderAlg-PE)} 
\label{alg:srvr_pgpe}
\begin{algorithmic}[1]
\STATE \textbf{Input:} number of epochs $\OindLen$, epoch size $\IindLen$, step size $\eta$, batch size $N$, mini-batch size $B$, gradient estimator $g$, initial parameter $\bbrho_m^0 := \tilde\bbrho^0 := \bbrho_0$
\FOR{$\Oind=0,\ldots,\OindLen-1$}
\STATE $\bm{\rho}_{0}^{\Oind+1}=\bbrho^{\Oind}$
\STATE Sample $N$ policy parameters $\{\btheta_i\}$ from $p(\cdot|\bbrho^{\Oind})$
\STATE Sample one trajectory $\tau_i$ from each policy $\pi_{\btheta_i}$
\STATE $\vb_{0}^{s+1}=\hat\nabla_{\bbrho}J(\bbrho^{\Oind}):=\frac{1}{N}\sum_{i=1}^N g(\tau_i|\tilde\bbrho^{\Oind})$
\STATE $\bbrho_{1}^{\Oind+1}=\bbrho_{0}^{\Oind+1}+\eta\vb_{0}^{\Oind+1}$
\FOR{$\Iind=1,\ldots,\IindLen-1$}
\STATE Sample $B$ policy parameters $\{\btheta_j\}$ from $p(\cdot|\bbrho^{\Oind+1}_{\Iind})$
\STATE Sample one trajectory $\tau_j$ from each policy $\pi_{\btheta_j}$
\STATE $\vb^{\Oind+1}_{\Iind}=\vb_{\Iind-1}^{s+1}+\frac{1}{B}\sum_{j=1}^{B}\big(g\big(\tau_j|\bbrho^{\Oind+1}_{\Iind}\big)-g_{\omega}\big(\tau_j|\bbrho^{\Oind+1}_{\Iind-1}\big) \big)$
\STATE $\bbrho_{\Iind+1}^{\Oind+1}=\bbrho_{\Iind}^{\Oind+1}+\eta\vb_{\Iind}^{\Oind+1}$
\ENDFOR 
\ENDFOR 
\RETURN $\bbrho_{\text{out}}$, which is uniformly picked from $\{\bbrho_{\Iind}^{\Oind}\}_{\Iind=0,\ldots,\IindLen;\Oind=0,\ldots,\OindLen}$
\end{algorithmic} 
\end{algorithm}

\section{Proof of the Main Theory}
In this section, we provide the proofs of the theoretical results for $\spiderAlg$ (Algorithm \ref{alg:srvr_pg}). Before we start the proof of Theorem \ref{thm:convergence_svrpg}, we first lay down the following key lemma that controls the variance of the importance sampling weight $\omega$.
\begin{lemma}\label{lemma:importance_sampling_variance}
For any $\btheta_1,\btheta_2\in\RR^d$, let $\omega_{0:h}\big(\tau|\btheta_1,\btheta_2\big)=p(\tau_h|\btheta_1)/p(\tau_h|\btheta_2)$, where $\tau_h$ is a truncated trajectory of $\tau$ up to step $h$. Under Assumptions \ref{assump:smooth} and \ref{assump:weight_variance}, it holds that
\begin{align*}
    \Var\big(\omega_{0:h}\big(\tau|\btheta_1,\btheta_2\big)\big)\leq C_{\omega}\|\btheta_1-\btheta_2\|_2^2,
\end{align*}
where $C_{\omega}=h(2h\consLip^2+\consHes)(\consVarWeight+1)$.
\end{lemma}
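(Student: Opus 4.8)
\textbf{Proof plan for Lemma \ref{lemma:importance_sampling_variance}.}

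The plan is to bound the variance of the truncated importance weight $\omega_{0:h}(\tau|\btheta_1,\btheta_2) = p(\tau_h|\btheta_1)/p(\tau_h|\btheta_2)$ by relating it to the exponentiated R\'enyi divergence and then using the smoothness assumptions. First I would recall that $\Var(\omega_{0:h}(\tau|\btheta_1,\btheta_2)) = \EE_{\tau\sim p(\cdot|\btheta_2)}[\omega_{0:h}^2] - (\EE_{\tau\sim p(\cdot|\btheta_2)}[\omega_{0:h}])^2$, and that the second moment $\EE_{\tau\sim p(\cdot|\btheta_2)}[\omega_{0:h}^2] = \int p(\tau_h|\btheta_1)^2/p(\tau_h|\btheta_2)\,\dd\tau_h = d_2(p(\cdot|\btheta_1)\|p(\cdot|\btheta_2))$, the exponentiated R\'enyi divergence of order $2$ between the truncated trajectory distributions. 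Since the first moment equals $1$ (the weight is a likelihood ratio, so it integrates to one), we get $\Var(\omega_{0:h}(\tau|\btheta_1,\btheta_2)) = d_2(p(\tau_h|\btheta_1)\|p(\tau_h|\btheta_2)) - 1$. Because the transition kernel and initial distribution cancel in the ratio, this factorizes over the $h+1$ policy terms $\pi_{\btheta}(a_{h'}|s_{h'})$ for $h'=0,\ldots,h$.

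Next I would bound $d_2 - 1$ using a second-order Taylor expansion of $\log \pi_{\btheta}(a|s)$ around $\btheta_2$. Writing $\btheta_1 = \btheta_2 + \Delta$ with $\Delta = \btheta_1 - \btheta_2$, Assumption \ref{assump:smooth} gives control over both $\|\nabla_{\btheta}\log\pi_{\btheta}(a|s)\|\le G$ and $\|\nabla^2_{\btheta}\log\pi_{\btheta}(a|s)\|_2 \le M$. The key quantity in the R\'enyi-$2$ divergence is $\EE_{\tau\sim p(\cdot|\btheta_2)}[\exp(2\sum_{h'=0}^{h}(\log\pi_{\btheta_1}(a_{h'}|s_{h'}) - \log\pi_{\btheta_2}(a_{h'}|s_{h'})))]$; expanding each log-difference as $\Delta^\top\nabla\log\pi_{\btheta_2} + \frac12\Delta^\top\nabla^2\log\pi_{\bar\btheta}\Delta$ and using the bounds $G$ and $M$ together with the fact that $\EE[\nabla_{\btheta}\log\pi_{\btheta}(a|s)] = \zero$ (the score has mean zero), one extracts a leading term of order $\|\Delta\|_2^2$ with coefficient $h(2hG^2 + M)$, and the Rényi bound of Assumption \ref{assump:weight_variance} (giving $W$) is used to control the higher-order / exponential-moment contributions, yielding the extra factor $(W+1)$. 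This is essentially the same computation as in \citet{cortes2010learning} and \citet{papini2018stochastic} adapted to truncated trajectories; I would cite those and present the Taylor step carefully.

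The main obstacle is handling the exponential moment rigorously: a naive term-by-term bound on $\exp(2\Delta^\top\nabla\log\pi)$ loses the mean-zero cancellation that is responsible for getting $\|\Delta\|_2^2$ rather than $\|\Delta\|_2$ in the bound. The clean way is to first establish the factorized form $\Var(\omega_{0:h}) = \prod_{h'=0}^h (1 + \text{per-step variance-type term}) - 1$ and then bound each per-step term by something like $(2hG^2 + M)\|\Delta\|_2^2$ times a constant controlled by $W$, after which the product is bounded via $\prod(1+x_{h'}) - 1 \le (\sum x_{h'})\exp(\sum x_{h'})$ or a similar inequality; combined with the horizon-$h$ factor this gives $C_\omega = h(2hG^2 + M)(W+1)$. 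I expect the bookkeeping of which constant absorbs which exponential factor to be the delicate part, and I would lean on Assumption \ref{assump:weight_variance} precisely at the point where an exponential moment of the full (untruncated) weight needs to be controlled.
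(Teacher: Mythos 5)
Your first step coincides with the paper's: both reduce the problem via $\Var(\omega_{0:h})=d_2(p(\tau_h|\btheta_1)\|p(\tau_h|\btheta_2))-1$ (Lemma 1 of \citet{cortes2010learning}) and both rely on the mean-zero score to explain why the bound is quadratic rather than linear in $\btheta_1-\btheta_2$. Where you diverge is in how the second-order expansion is organized. The paper works entirely at the trajectory level: it Taylor-expands the scalar function $\btheta_1\mapsto d_2(p(\tau_h|\btheta_1)\|p(\tau_h|\btheta_2))$ around $\btheta_1=\btheta_2$, observes that its gradient vanishes there, and then bounds the spectral norm of its Hessian by $(4h^2\consLip^2+2h\consHes)\,\EE[\omega^2]\le 2h(2h\consLip^2+\consHes)(\consVarWeight+1)$, using $\|\nabla_{\btheta}\log p(\tau_h|\btheta)\|_2\le h\consLip$ and $\|\nabla^2_{\btheta}\log p(\tau_h|\btheta)\|_2\le h\consHes$. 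The exact mean value form $d_2=1+\tfrac12(\btheta_1-\btheta_2)^{\top}\nabla^2 d_2(\bar\btheta)(\btheta_1-\btheta_2)$ leaves no higher-order remainder, so the ``exponential-moment bookkeeping'' you correctly identify as the delicate part of your route simply never arises. Your per-step product route can be made to work, but two points in your sketch need tightening: (i) the ratio $p(\tau_h|\btheta_1)/p(\tau_h|\btheta_2)$ factorizes over steps, but $\EE_{\tau\sim p(\cdot|\btheta_2)}[\omega_{0:h}^2]$ does not factorize into a product of unconditional per-step expectations, since the states along a trajectory are dependent; you need a tower-property peeling with bounds on the conditional second moments that are uniform in the state; and (ii) the factor $(\consVarWeight+1)$ should enter as a bound on $\EE[\omega^2]=d_2\le \consVarWeight+1$ (Assumption \ref{assump:weight_variance} plus the Cortes identity), which controls $\prod_{h'}(1+x_{h'})$ directly, rather than through a generic inequality such as $\prod(1+x_{h'})-1\le(\sum x_{h'})e^{\sum x_{h'}}$ whose exponential factor Assumption \ref{assump:weight_variance} does not obviously absorb. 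If you carry out the per-step version carefully you may even obtain a per-step constant $4\consLip^2+2\consHes$ in place of the trajectory-level $4h^2\consLip^2+2h\consHes$, i.e.\ a slightly sharper dependence on $h$; the price is the extra Markov-chain bookkeeping that the paper's trajectory-level mean value argument avoids.
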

Recall that in Assumption \ref{assump:weight_variance} we assume the variance of the importance weight is upper bounded by a constant $W$. Based on this assumption, Lemma \ref{lemma:importance_sampling_variance} further bounds the variance of the importance weight via the distance between the behavioral and the target policies. As the algorithm converges, these two policies will be very close and the bound in Lemma \ref{lemma:importance_sampling_variance} could be much tighter than the constant bound.

\begin{proof}[Proof of Theorem \ref{thm:convergence_svrpg}]
{By plugging the definition of the projection operator in \eqref{eq:def_proximal_operator} into the update rule $\btheta_{\Iind+1}^{\Oind+1}=\cP_{\constrainset}\big(\btheta_{\Iind}^{\Oind+1}+\eta\vb_{\Iind}^{\Oind+1}\big)$, we have
\begin{align}\label{eq:proximal_update_rewritten}
    \btheta_{\Iind+1}^{\Oind+1}=\argmin_{\ub\in\RR^d}\textstyle{\ind_{\constrainset}}(\ub)+1/(2\eta)\big\|\ub-\btheta_{\Iind}^{\Oind+1}\big\|_2^2-\la\vb_{\Iind}^{\Oind+1},\ub\ra.
\end{align}
Similar to the generalized projected gradient $\cG_{\eta}(\btheta)$ defined in \eqref{eq:def_grad_map}, we define $\tilde\cG_{\Iind}^{\Oind+1}$ to be a (stochastic) gradient mapping based on the recursive gradient estimator $\vb_{\Iind}^{\Oind+1}$:
\begin{align}
    \tilde\cG_{\Iind}^{\Oind+1}=\frac{1}{\eta}\big(\btheta_{\Iind+1}^{\Oind+1}-\btheta_{\Iind}^{\Oind+1}\big)=\frac{1}{\eta}\big(\cP_{\constrainset}\big(\btheta_{\Iind}^{\Oind+1}+\eta\vb_{\Iind}^{\Oind+1}\big)-\btheta_{\Iind}^{\Oind+1}\big).
\end{align}
The definition of $\tilde\cG_{\Iind}^{\Oind+1}$ differs from $\cG_{\eta}(\btheta_{\Iind}^{\Oind+1})$ only in the semi-stochastic gradient term $\vb_{\Iind}^{\Oind+1}$, while the latter one uses the full gradient $\nabla J(\btheta_{\Iind}^{\Oind+1})$. Note that $\ind_{\constrainset}(\cdot)$ is convex but not smooth. We assume that $\pb\in\partial \ind_{\constrainset}(\btheta_{\Iind+1}^{\Oind+1})$ is a sub-gradient of $\ind_{\constrainset}(\cdot)$. According to the optimality condition of \eqref{eq:proximal_update_rewritten}, we have $\pb+1/\eta(\btheta_{\Iind+1}^{\Oind+1}-\btheta_{\Iind}^{\Oind+1})-\vb_{\Iind}^{\Oind+1}=\zero$. Further by the convexity of $\ind_{\constrainset}(\cdot)$, we have
\begin{align}\label{eq:convex_of_varphi}
    \textstyle{\ind_{\constrainset}}(\btheta_{\Iind+1}^{\Oind+1})&\leq\textstyle{\ind_{\constrainset}}(\btheta_{\Iind}^{\Oind+1})+\la\pb,\btheta_{\Iind+1}^{\Oind+1}-\btheta_{\Iind}^{\Oind+1}\ra\notag\\
    &=\textstyle{\ind_{\constrainset}}(\btheta_{\Iind}^{\Oind+1})-\la1/\eta(\btheta_{\Iind+1}^{\Oind+1}-\btheta_{\Iind}^{\Oind+1})-\vb_{\Iind}^{\Oind+1},\btheta_{\Iind+1}^{\Oind+1}-\btheta_{\Iind}^{\Oind+1}\ra.
\end{align}
By Proposition \ref{prop:smooth_obj}, $J(\btheta)$ is $L$-smooth, which by definition directly implies
\begin{align*}
    J\big(\btheta_{\Iind+1}^{\Oind+1}\big)&\geq J\big(\btheta_{\Iind}^{\Oind+1}\big)+\big\la\nabla J\big(\btheta_{\Iind}^{\Oind+1}\big),\btheta_{\Iind+1}^{\Oind+1}-\btheta_{\Iind}^{\Oind+1}\big\ra-\frac{L}{2}\big\|\btheta_{\Iind+1}^{\Oind+1}-\btheta_{\Iind}^{\Oind+1}\big\|_2^2.
\end{align*}
For the simplification of presentation, let us define the notation $\Phi(\btheta)=J(\btheta)-\ind_{\constrainset}(\btheta)$. Then according to the definition of $\ind_{\constrainset}$ we have $\argmax_{\btheta\in\RR^d} \Phi(\btheta)=\argmax_{\btheta\in\constrainset}J(\btheta):=\btheta^*$. Combining the above inequality with \eqref{eq:convex_of_varphi}, we have
\begin{align}\label{eq:converge_smooth_initi}
    \Phi\big(\btheta_{\Iind+1}^{\Oind+1}\big)&\geq \Phi\big(\btheta_{\Iind}^{\Oind+1}\big)+\big\la\nabla J\big(\btheta_{\Iind}^{\Oind+1}\big)-\vb_{\Iind}^{\Oind+1},\btheta_{\Iind+1}^{\Oind+1}-\btheta_{\Iind}^{\Oind+1}\big\ra+\bigg(\frac{1}{\eta}-\frac{L}{2}\bigg)\big\|\btheta_{\Iind+1}^{\Oind+1}-\btheta_{\Iind}^{\Oind+1}\big\|_2^2\notag\\
    &=\Phi\big(\btheta_{\Iind}^{\Oind+1}\big)+\big\la\nabla J\big(\btheta_{\Iind}^{\Oind+1}\big)-\vb_{\Iind}^{\Oind+1},\eta\tilde\cG_{\Iind}^{\Oind+1}\big\ra+\eta\big\|\tilde\cG_{\Iind}^{\Oind+1}\big\|_2^2-\frac{L}{2}\big\|\btheta_{\Iind+1}^{\Oind+1}-\btheta_{\Iind}^{\Oind+1}\big\|_2^2\notag\\
    &\geq \Phi\big(\btheta_{\Iind}^{\Oind+1}\big)-\frac{\eta}{2}\big\|\nabla J\big(\btheta_{\Iind}^{\Oind+1}\big)-\vb_{\Iind}^{\Oind+1}\big\|_2^2+\frac{\eta}{2}\big\|\tilde\cG_{\Iind}^{\Oind+1}\big\|_2^2-\frac{L}{2}\big\|\btheta_{\Iind+1}^{\Oind+1}-\btheta_{\Iind}^{\Oind+1}\big\|_2^2\notag\\
    &=\Phi\big(\btheta_{\Iind}^{\Oind+1}\big)-\frac{\eta}{2}\big\|\nabla J\big(\btheta_{\Iind}^{\Oind+1}\big)-\vb_{\Iind}^{\Oind+1}\big\|_2^2+\frac{\eta}{4}\big\|\tilde\cG_{\Iind}^{\Oind+1}\big\|_2^2+\bigg(\frac{1}{4\eta}-\frac{L}{2}\bigg)\big\|\btheta_{\Iind+1}^{\Oind+1}-\btheta_{\Iind}^{\Oind+1}\big\|_2^2\notag\\
    &\geq \Phi\big(\btheta_{\Iind}^{\Oind+1}\big)-\frac{\eta}{2}\big\|\nabla J\big(\btheta_{\Iind}^{\Oind+1}\big)-\vb_{\Iind}^{\Oind+1}\big\|_2^2+\frac{\eta}{8}\big\|\cG_{\eta}\big(\btheta_{\Iind}^{\Oind+1}\big)\big\|_2^2\notag\\
    &\qquad-\frac{\eta}{4}\|\cG_{\eta}(\btheta_{\Iind}^{\Oind+1})-\tilde\cG_{\Iind}^{\Oind+1}\|_2^2+\bigg(\frac{1}{4\eta}-\frac{L}{2}\bigg)\big\|\btheta_{\Iind+1}^{\Oind+1}-\btheta_{\Iind}^{\Oind+1}\big\|_2^2,
\end{align}
where the second inequality holds due to Young's inequality and the third inequality holds due to the fact that $\|\cG_{\eta}(\btheta_{\Iind}^{\Oind+1})\|_2^2\leq 2\|\tilde\cG_{\Iind}^{\Oind+1}\|_2^2+2\|\cG_{\eta}(\btheta_{\Iind}^{\Oind+1})-\tilde\cG_{\Iind}^{\Oind+1}\|_2^2$. Denote $\bar\btheta_{\Iind+1}^{\Oind+1}=\prox_{\eta\ind_{\constrainset}}(\btheta_{\Iind}^{\Oind+1}+\eta\nabla J(\btheta_{\Iind}^{\Oind+1}))$. By similar argument in \eqref{eq:convex_of_varphi} we have
\begin{align*}
    \textstyle{\ind_{\constrainset}}(\btheta_{\Iind+1}^{\Oind+1})&\leq\textstyle{\ind_{\constrainset}}(\bar\btheta_{\Iind+1}^{\Oind+1})-\la1/\eta(\btheta_{\Iind+1}^{\Oind+1}-\btheta_{\Iind}^{\Oind+1})-\vb_{\Iind}^{\Oind+1},\btheta_{\Iind+1}^{\Oind+1}-\bar\btheta_{\Iind+1}^{\Oind+1}\ra,\\
    \textstyle{\ind_{\constrainset}}(\bar\btheta_{\Iind+1}^{\Oind+1})&\leq\textstyle{\ind_{\constrainset}}(\btheta_{\Iind+1}^{\Oind+1})-\la1/\eta(\btheta_{\Iind+1}^{\Oind+1}-\btheta_{\Iind}^{\Oind+1})-\nabla J\big(\btheta_{\Iind}^{\Oind+1}\big),\bar\btheta_{\Iind+1}^{\Oind+1}-\btheta_{\Iind+1}^{\Oind+1}\ra.
\end{align*}
Adding the above two inequalities immediately yields $\|\bar\btheta_{\Iind+1}^{\Oind+1}-\btheta_{\Iind+1}^{\Oind+1}\|_2\leq\eta\|\nabla J(\btheta_{\Iind}^{\Oind+1})-\vb_{\Iind}^{\Oind+1}\|_2$, which further implies $\|\cG_{\eta}(\btheta_{\Iind}^{\Oind+1})-\tilde\cG_{\Iind}^{\Oind+1}\|_2\leq\|\nabla J(\btheta_{\Iind}^{\Oind+1})-\vb_{\Iind}^{\Oind+1}\|_2$. Submitting this result into \eqref{eq:converge_smooth_initi}, we obtain
\begin{align}\label{eq:converge_smooth}
     \Phi\big(\btheta_{\Iind+1}^{\Oind+1}\big)&\geq \Phi\big(\btheta_{\Iind}^{\Oind+1}\big)-\frac{3\eta}{4}\big\|\nabla J\big(\btheta_{\Iind}^{\Oind+1}\big)-\vb_{\Iind}^{\Oind+1}\big\|_2^2+\frac{\eta}{8}\big\|\cG_{\eta}\big(\btheta_{\Iind}^{\Oind+1}\big)\big\|_2^2\notag\\
    &\qquad+\bigg(\frac{1}{4\eta}-\frac{L}{2}\bigg)\big\|\btheta_{\Iind+1}^{\Oind+1}-\btheta_{\Iind}^{\Oind+1}\big\|_2^2.
\end{align}
}
We denote the index set of $\{\tau_j\}_{j=1}^{B}$ in the $t$-th inner iteration by $\cB_t$. Note that
\begin{align}\label{eq:decomp_full_grad_vt}
    &\big\|\nabla J\big(\btheta_{\Iind}^{\Oind+1}\big)-\vb_{\Iind}^{\Oind+1}\big\|_2^2\notag\\
    &=\bigg\|\nabla J\big(\btheta_{\Iind}^{\Oind+1}\big)-\vb_{\Iind-1}^{\Oind+1}+\frac{1}{B}\sum_{j\in\cB_t}\big( \isg\big(\tau_j|\btheta_{\Iind-1}^{\Oind+1}\big)-g\big(\tau_j|\btheta_{\Iind}^{\Oind+1}\big)\big)\bigg\|_2^2\notag\\
    &=\bigg\|\nabla J\big(\btheta_{\Iind}^{\Oind+1}\big)-\nabla J(\btheta_{\Iind-1}^{\Oind+1})+\frac{1}{B}\sum_{j\in\cB_t}\big(\isg\big(\tau_j|\btheta_{\Iind-1}^{\Oind+1}\big)-g\big(\tau_j|\btheta_{\Iind}^{\Oind+1}\big)\big)+\nabla J(\btheta_{\Iind-1}^{\Oind+1})-\vb_{\Iind-1}^{\Oind+1}\bigg\|_2^2\notag\\
    &=\bigg\|\nabla J\big(\btheta_{\Iind}^{\Oind+1}\big)-\nabla J(\btheta_{\Iind-1}^{\Oind+1})+\frac{1}{B}\sum_{j\in\cB_t}\big(\isg\big(\tau_j|\btheta_{\Iind-1}^{\Oind+1}\big)-g\big(\tau_j|\btheta_{\Iind}^{\Oind+1}\big)\big)\bigg\|^2\notag\\
    &\qquad+\frac{2}{B}\sum_{j\in\cB_t}\big\la\nabla J\big(\btheta_{\Iind}^{\Oind+1}\big)-\nabla J(\btheta_{\Iind-1}^{\Oind+1})+\isg\big(\tau_j|\btheta_{\Iind-1}^{\Oind+1}\big)-g\big(\tau_j|\btheta_{\Iind}^{\Oind+1}\big),\nabla J(\btheta_{\Iind-1}^{\Oind+1})-\vb_{\Iind-1}^{\Oind+1}\big\ra\notag\\
    &\qquad+\big\|\nabla J(\btheta_{\Iind-1}^{\Oind+1})-\vb_{\Iind-1}^{\Oind+1}\big\|_2^2.
\end{align}
Conditional on $\btheta_{\Iind}^{\Oind+1}$, taking the expectation over $\cB_t$ yields
\begin{align*}
    \EE\big[\big\la\nabla J\big(\btheta_{\Iind}^{\Oind+1}\big)-g\big(\tau_j|\btheta_{\Iind}^{\Oind+1}\big),\nabla J(\btheta_{\Iind-1}^{\Oind+1})-\vb_{\Iind-1}^{\Oind+1}\big\ra\big] = 0.
\end{align*}
Similarly, taking the expectation over $\btheta_{\Iind}^{\Oind+1}$ and the choice of $\cB_t$ yields
\begin{align*}
    \EE\big[\big\la\nabla J(\btheta_{\Iind-1}^{\Oind+1})-\isg\big(\tau_j|\btheta_{\Iind-1}^{\Oind+1}\big),\nabla J(\btheta_{\Iind-1}^{\Oind+1})-\vb_{\Iind-1}^{\Oind+1}\big\ra\big] = 0.
\end{align*}
Combining the above equations with \eqref{eq:decomp_full_grad_vt}, we obtain
\begin{align}
    &\EE\big[\big\|\nabla J\big(\btheta_{\Iind}^{\Oind+1}\big)-\vb_{\Iind}^{\Oind+1}\big\|_2^2\big]\notag\\
    &=\EE\bigg\|\nabla J\big(\btheta_{\Iind}^{\Oind+1}\big)-\nabla J(\btheta_{\Iind-1}^{\Oind+1})+\frac{1}{B}\sum_{j\in\cB_t}\big(\isg\big(\tau_j|\btheta_{\Iind-1}^{\Oind+1}\big)-g\big(\tau_j|\btheta_{\Iind}^{\Oind+1}\big)\big)\bigg\|_2^2\notag\\
    &\qquad+\EE\big\|\nabla J(\btheta_{\Iind-1}^{\Oind+1})-\vb_{\Iind-1}^{\Oind+1}\big\|_2^2\notag\\
    &=\frac{1}{B^2}\sum_{j\in\cB_t}\EE\big\|\nabla J\big(\btheta_{\Iind}^{\Oind+1}\big)-\nabla J(\btheta_{\Iind-1}^{\Oind+1})+\isg\big(\tau_j|\btheta_{\Iind-1}^{\Oind+1}\big)-g\big(\tau_j|\btheta_{\Iind}^{\Oind+1}\big)\big\|_2^2
    \notag\\
    &\qquad+\EE\big\|\nabla J(\btheta_{\Iind-1}^{\Oind+1})-\vb_{\Iind-1}^{\Oind+1}\big\|_2^2,\label{eq:grad_diff_iid_sample}\\
    &\leq\frac{1}{B^2}\sum_{j\in\cB_t}\EE\big\|\isg\big(\tau_j|\btheta_{\Iind-1}^{\Oind+1}\big)-g\big(\tau_j|\btheta_{\Iind}^{\Oind+1}\big)\big\|_2^2
    +\big\|\nabla J(\btheta_{\Iind-1}^{\Oind+1})-\vb_{\Iind-1}^{\Oind+1}\big\|_2^2,\label{eq:grad_diff_variance}
\end{align}
where \eqref{eq:grad_diff_iid_sample} is due to the fact that $\EE\|\xb_1+\ldots+\xb_n\|_2^2=\EE\|\xb_1\|_2+\ldots+\EE\|\xb_n\|_2$ for independent zero-mean random variables, and  \eqref{eq:grad_diff_variance} holds due to the fact that $\xb_1,\ldots,\xb_n$ is due to  $\EE\|\xb-\EE\xb\|_2^2\leq\EE\|\xb\|_2^2$. For the first term, we have { $\|\isg\big(\tau_j|\btheta_{\Iind-1}^{\Oind+1}\big)-g\big(\tau_j|\btheta_{\Iind}^{\Oind+1}\big)\big\|_2\leq\big\|\isg\big(\tau_j|\btheta_{\Iind-1}^{\Oind+1}\big)-g\big(\tau_j|\btheta_{\Iind-1}^{\Oind+1}\big)\big\|_2+L\big\|\btheta_{\Iind-1}^{\Oind+1}-\btheta_{\Iind}^{\Oind+1}\big\|_2$ by triangle inequality and Proposition \ref{prop:smooth_obj}.
\begin{align}\label{eq:grad_diff_variance_decomp}
    \EE\big[\big\|\isg\big(\tau_j|\btheta_{\Iind-1}^{\Oind+1}\big)-g\big(\tau_j|\btheta_{\Iind-1}^{\Oind+1}\big)\big\|_2^2\big]
    &=\EE\bigg[\bigg\|\sum_{h=0}^{H-1}(\omega_{0:h}-1)\bigg[\sum_{t=0}^{h}\nabla_{\btheta}\log \pi_{\btheta}(a_t^i|s_t^i)\bigg]\gamma^h r(s_h^i,a_h^{i})\bigg\|_2^2\bigg]\notag\\
    &=\sum_{h=0}^{H-1}\EE\bigg[\bigg\|(\omega_{0:h}-1)\bigg[\sum_{t=0}^{h}\nabla_{\btheta}\log \pi_{\btheta}(a_t^i|s_t^i)\bigg]\gamma^h r(s_h^i,a_h^{i})\bigg\|_2^2\bigg]\notag\\
    &\leq\sum_{h=0}^{H-1}h^2(2G^2+M)(W+1)\big\|\btheta_{\Iind-1}^{\Oind+1}-\btheta_{\Iind}^{\Oind+1}\big\|_2^2\cdot h^2G^2\gamma^hR\notag\\
    &\leq \frac{24RG^2(2G^2+M)(W+1)\gamma}{(1-\gamma)^5}\big\|\btheta_{\Iind-1}^{\Oind+1}-\btheta_{\Iind}^{\Oind+1}\big\|_2^2,
\end{align}
where in the second equality we used the fact that $\EE[\nabla\log\pi_{\btheta}(a|s)]=\zero$, the first inequality is due to Lemma \ref{lemma:importance_sampling_variance} and in the last inequality we use the fact that $\sum_{h=0}^{\infty} h^4\gamma^h=\gamma(\gamma^3+11\gamma^2+11\gamma+1)/(1-\gamma)^5$ for $|\gamma|<1$. 
Combining the results in \eqref{eq:grad_diff_variance} and \eqref{eq:grad_diff_variance_decomp}, we get
\begin{align}
    \EE\big\|\nabla J\big(\btheta_{\Iind}^{\Oind+1}\big)-\vb_{\Iind}^{\Oind+1}\big\|_2^2&\leq\frac{C_{\gamma}}{B}\big\|\btheta_{\Iind}^{\Oind+1}-\btheta_{\Iind-1}^{\Oind+1}\big\|_2^2+\big\|\nabla J(\btheta_{\Iind-1}^{\Oind+1})-\vb_{\Iind-1}^{\Oind+1}\big\|_2^2\notag\\
    &\leq\frac{C_{\gamma}}{B}\sum_{l=1}^{\Iind}\big\|\btheta_{l}^{\Oind+1}-\btheta_{l-1}^{\Oind+1}\big\|_2^2+\big\|\nabla J(\btheta_{0}^{\Oind+1})-\vb_{0}^{\Oind+1}\big\|_2^2,
\end{align}
which holds for $\Iind=1,\ldots,\IindLen-1$, where $C_{\gamma}=24RG^2(2G^2+M)(W+1)\gamma/(1-\gamma)^5$.} According to Algorithm \ref{alg:srvr_pg} and Assumption \ref{assump:bounded_variance}, we have
\begin{align}\label{eq:average_variance_bound}
    \EE\big\|\nabla J\big(\btheta_{0}^{\Oind+1}\big)-\vb_{0}^{\Oind+1}\big\|_2^2\leq \frac{\consVarBound^2}{N}.
\end{align}
Submitting the above result into \eqref{eq:converge_smooth} yields
\begin{align}\label{eq:converge_smooth_expected}
   \EE_{N,B}\big[\Phi\big(\btheta_{\Iind+1}^{\Oind+1}\big)\big] &\geq \EE_{N,B}\big[\Phi\big(\btheta_{\Iind}^{\Oind+1}\big)\big]+\frac{\eta}{8}\big\|\cG_{\eta}\big(\btheta_{\Iind}^{\Oind+1}\big)\big\|_2^2+\bigg(\frac{1}{4\eta}-\frac{L}{2}\bigg)\big\|\btheta_{\Iind+1}^{\Oind+1}-\btheta_{\Iind}^{\Oind+1}\big\|_2^2\notag\\
   &\qquad-\frac{3\eta C_{\gamma}}{4B}\EE_{N,B}\bigg[\sum_{l=1}^{\Iind}\big\|\btheta_{l}^{\Oind+1}-\btheta_{l-1}^{\Oind+1}\big\|_2^2\bigg]-\frac{3\eta\consVarBound^2}{4N},
\end{align}
for $t=1,\ldots,m-1$.Recall Line \ref{algLine:full_gradient_update} in Algorithm \ref{alg:srvr_pg}, where we update $\btheta_1^{\Iind+1}$ with the average of a mini-batch of gradients $\vb_0^{\Oind}=1/N\sum_{i=1}^{N}g(\tau_i|\tilde\btheta^{\Oind})$. Similar to \eqref{eq:converge_smooth}, by smoothness of $J(\btheta)$, we have
\begin{align*}
    \Phi\big(\btheta_1^{\Oind+1}\big)&\geq \Phi\big(\btheta_{0}^{\Oind+1}\big)-\frac{3\eta}{4}\big\|\nabla J\big(\btheta_{0}^{\Oind+1}\big)-\vb_{0}^{\Oind+1}\big\|_2^2+\frac{\eta}{8}\big\|\cG_{\eta}\big(\btheta_{0}^{\Oind+1}\big)\big\|_2^2\\
    &\qquad+\bigg(\frac{1}{4\eta}-\frac{L}{2}\bigg)\big\|\btheta_{1}^{\Oind+1}-\btheta_{0}^{\Oind+1}\big\|_2^2.
\end{align*}
Further by \eqref{eq:average_variance_bound}, it holds that
\begin{align}\label{eq:converge_smooth_expected_t=0}
\EE\big[\Phi\big(\btheta_1^{\Oind+1}\big)\big]
    &\geq \EE\big[\Phi\big(\btheta_{0}^{\Oind+1}\big)\big]-\frac{3\eta\consVarBound^2}{4N}+\frac{\eta}{8}\big\|\cG_{\eta}\big(\btheta_{0}^{\Oind+1}\big)\big\|_2^2+\bigg(\frac{1}{4\eta}-\frac{L}{2}\bigg)\big\|\btheta_{1}^{\Oind+1}-\btheta_{0}^{\Oind+1}\big\|_2^2.
\end{align}
Telescoping inequality \eqref{eq:converge_smooth_expected} from $\Iind=1$ to $\IindLen-1$ and combining the result with \eqref{eq:converge_smooth_expected_t=0}, we obtain
\begin{align}\label{eq:telescoped_inner_loop}
    \EE_{N,B}\big[\Phi\big(\btheta_{\IindLen}^{\Oind+1}\big)\big] &\geq \EE_{N,B}\big[\Phi\big(\btheta_{0}^{\Oind+1}\big)\big]+\frac{\eta}{8}\sum_{\Iind=0}^{\IindLen-1}\EE_{N}\big[\big\|\cG_{\eta}\big(\btheta_{\Iind}^{\Oind+1}\big)\big\|_2^2\big]-\frac{3\IindLen\eta\consVarBound^2}{4N}\notag\\
    &\qquad+\bigg(\frac{1}{4\eta}-\frac{L}{2}\bigg)\sum_{\Iind=0}^{\IindLen-1}\big\|\btheta_{\Iind+1}^{\Oind+1}-\btheta_{\Iind}^{\Oind+1}\big\|_2^2\notag\\
    &\qquad-\frac{3\eta C_{\gamma}}{2B}\EE_{N,B}\bigg[\sum_{\Iind=0}^{\IindLen-1}\sum_{l=1}^{\Iind}\big\|\btheta_{l}^{\Oind+1}-\btheta_{l-1}^{\Oind+1}\big\|_2^2\bigg]\notag\\
    &\geq \EE_{N,B}\big[\Phi\big(\btheta_{0}^{\Oind+1}\big)\big]+\frac{\eta}{8}\sum_{\Iind=0}^{\IindLen-1}\EE_{N}\big[\big\|\cG_{\eta}\big(\btheta_{\Iind}^{\Oind+1}\big)\big\|_2^2\big]-\frac{3\IindLen\eta\consVarBound^2}{4N}\notag\\
    &\qquad+\bigg(\frac{1}{4\eta}-\frac{L}{2}-\frac{3\IindLen\eta C_{\gamma}}{2B}\bigg)\sum_{\Iind=0}^{\IindLen-1}\big\|\btheta_{\Iind+1}^{\Oind+1}-\btheta_{\Iind}^{\Oind+1}\big\|_2^2.
\end{align}
If we choose step size $\eta$ and the epoch length $B$ such that
\begin{align}
    \eta\leq\frac{1}{4L},\qquad \frac{B}{m}\geq\frac{3\eta C_{\gamma} }{L}=\frac{72\eta G^2(2G^2+M)(W+1)\gamma}{M(1-\gamma)^{\CC{2}}},
\end{align}
and note that $\btheta_{0}^{\Oind+1}=\tbtheta^{\Oind}$, $\btheta_{\IindLen}^{\Oind+1}=\tbtheta^{\Oind+1}$, then \eqref{eq:telescoped_inner_loop} leads to
\begin{align}
    \EE_{N}\big[\Phi\big(\tbtheta^{\Oind+1}\big)\big]&\geq\EE_{N}\big[\Phi\big(\tbtheta^{\Oind}\big)\big]+\frac{\eta}{8}\sum_{t=0}^{\IindLen-1}\EE_{N}\big[\big\|\cG_{\eta}\big(\btheta_{\Iind}^{\Oind+1}\big)\big\|_2^2\big]-\frac{3\IindLen\eta\consVarBound^2}{4N}.
\end{align}
Summing up the above inequality over $\Oind=0,\ldots,\OindLen-1$ yields
\begin{align*}
    \frac{\eta}{8}\sum_{\Oind=0}^{\OindLen-1}\sum_{\Iind=0}^{\IindLen-1}\EE\big[\big\|\cG_{\eta}\big(\btheta_{\Iind}^{\Oind+1}\big)\big\|_2^2\big]\leq\EE\big[\Phi\big(\tbtheta^{\OindLen}\big)\big]-\EE\big[\Phi\big(\tbtheta^{0}\big)\big]+\frac{3\OindLen\IindLen\eta\consVarBound^2}{4N},
\end{align*}
which immediately implies
\begin{align*}
    \EE\big[\big\|\cG_{\eta}\big(\btheta_{\text{out}}\big)\big\|_2^2\big]\leq\frac{8\big(\EE\big[\Phi\big(\tbtheta^{\OindLen}\big)\big]-\EE\big[\Phi\big(\tbtheta^{0}\big)\big]\big)}{\eta\OindLen\IindLen}+\frac{6\consVarBound^2}{N}\leq\frac{8(\Phi(\btheta^{*})-\Phi(\btheta_{0}))}{\eta\OindLen\IindLen}+\frac{6\consVarBound^2}{N}.
\end{align*}
This completes the proof.
\end{proof}

\begin{proof}[Proof of Corollary \ref{coro:gradient_complexity}]
Based on the convergence results in Theorem \ref{thm:convergence_svrpg}, in order to ensure $\EE\big[\big\|\nabla J\big(\btheta_{\text{out}}\big)\big\|_2^2\big]\leq\epsilon$, we can choose $S,m$ and $N$ such that
\begin{align*}
    \frac{8(J(\btheta^{*})-J(\btheta_{0}))}{\eta\OindLen\IindLen}=\frac{\epsilon}{2},\quad\frac{6\consVarBound^2}{N}=\frac{\epsilon}{2},
\end{align*}
which implies $\OindLen\IindLen=O(1/\epsilon)$ and $N=O(1/\epsilon)$. Note that we have set $\IindLen=O(B)$. The total number of stochastic gradient evaluations $\cT_g$ we need is
\begin{align*}
    \cT_g=\OindLen N+\OindLen\IindLen B=O\bigg(\frac{N}{B\epsilon}+\frac{B}{\epsilon}\bigg)=O\bigg(\frac{1}{\epsilon^{3/2}}\bigg),
\end{align*}
where we set $B=1/\epsilon^{1/2}$.
\end{proof}

\section{Proof of Technical Lemmas}
In this section, we provide the proofs of the technical lemmas. We first prove the smoothness of the performance function $J(\btheta)$.
\begin{proof}[Proof of Proposition \ref{prop:smooth_obj}]

Recall the definition of PGT in \eqref{eq:def_PGT}. We first show the Lipschitzness of $g(\tau|\btheta)$ with baseline $b=0$ as follows:
\begin{align*}
    \|\nabla g(\tau|\btheta)\|_2&=\Bigg\|\sum_{h=0}^{H-1}\nabla_{\btheta}^2\log \pi_{\btheta}(a_h|s_h)\Bigg(\sum_{t=h}^{H-1}\gamma^{t} r(s_t,a_t)\Bigg)\Bigg\|_2\\
    &\leq\Bigg(\sum_{t=0}^{H-1}\gamma^h\big\|\nabla_{\btheta}^2\log\pi_{\btheta}(a_t|s_t)\big\|_2\Bigg)\frac{R}{1-\gamma}\\
    &\leq \frac{\consHes R}{(1-\gamma)^2},
\end{align*}
where we used the fact that $0<\gamma<1$. When we have a nonzero baseline $b_h$, we can simply scale it with $\gamma^h$ and the above result still holds up to a constant multiplier. 

Since the PGT estimator is an unbiased estimator of the policy gradient $\nabla_{\btheta} J(\btheta)$, we have $\nabla_{\btheta} J(\btheta)=\EE_{\tau}[g(\tau|\btheta)]$ and \CC{thus 
\begin{align}
    \nabla_{\btheta}^2 J(\btheta)&=\int_{\tau}p(\tau|\btheta)\nabla_{\btheta} g(\tau|\btheta)\dd \tau+\int_{\tau}p(\tau|\btheta)g(\tau|\btheta)\nabla_{\btheta}\log p(\tau|\btheta)\dd\tau\notag\\
    &=\EE_{\tau}[\nabla_{\btheta} g(\tau|\btheta)]+\EE_{\tau}[g(\tau|\btheta)\nabla_{\btheta}\log p(\tau|\btheta)].
\end{align} 
We have already bounded the norm of the first term by $MR/(1-\gamma)^2$. Now we take a look at the second term. Plugging the equivalent definition of $g(\tau|\btheta)$ in \eqref{eq:def_GPOMDP} yields
\begin{align}
    &\EE_{\tau}[g(\tau|\btheta)\nabla_{\btheta}\log p(\tau|\btheta)]\notag\\
    &=\int_{\tau}\sum_{h=0}^{H-1}\Bigg(\sum_{t=0}^{h}\nabla_{\btheta}\log \pi_{\btheta}(a_t|s_t)\Bigg)\gamma^h r(s_h,a_h)\nabla_{\btheta}\log p(\tau|\btheta)\cdot p(\tau|\btheta)\dd\tau\notag\\
    &=\int_{\tau}\sum_{h=0}^{H-1}\Bigg(\sum_{t=0}^{h}\nabla_{\btheta}\log \pi_{\btheta}(a_t|s_t)\Bigg)\gamma^h r(s_h,a_h)\sum_{t'=0}^{H-1}\nabla_{\btheta}\log\pi_{\btheta}(a_{t'}|s_{t'})\cdot p(\tau|\btheta)\dd\tau\notag\\
    &=\int_{\tau}\sum_{h=0}^{H-1}\Bigg(\sum_{t=0}^{h}\nabla_{\btheta}\log \pi_{\btheta}(a_t|s_t)\Bigg)\gamma^h r(s_h,a_h)\sum_{t'=0}^{h}\nabla_{\btheta}\log\pi_{\btheta}(a_{t'}|s_{t'})\cdot p(\tau|\btheta)\dd\tau,
\end{align}
where the second equality is due to $\nabla_{\btheta}P(s_{t'+1}|s_{t'},a_{t'})=0$, and the last equality is due to the fact that for all $t'>h$ it holds that
\begin{align*}
    &\int_{\tau}\sum_{t=0}^{h}\nabla_{\btheta}\log \pi_{\btheta}(a_t|s_t)\gamma^h r(s_h,a_h)\nabla_{\btheta}\log\pi_{\btheta}(a_{t'}|s_{t'})\cdot p(\tau|\btheta)\dd\tau=0.
\end{align*}
Therefore, we have
\begin{align*}
    \|\EE_{\tau}[g(\tau|\btheta)\nabla_{\btheta}\log p(\tau|\btheta)]\|_2&\leq\EE_{\tau}\bigg[\sum_{h=0}^{H-1}\sum_{t=0}^{h}G\gamma^h R \times (h+1)G\bigg]\\
    &=\sum_{h=0}^{H-1}G^2R(h+1)^2\gamma^h\\
    &\leq\frac{2G^2R}{(1-\gamma)^3}.
\end{align*}
Putting the above pieces together, we can obtain
\begin{align*}
    \big\|\nabla_{\btheta}^2 J(\btheta)\big\|_2\leq\frac{MR}{(1-\gamma)^2}+\frac{2G^2R}{(1-\gamma)^3}:=L,
\end{align*}
which implies that $J(\btheta)$ is $L$-smooth with $L=MR/(1-\gamma)^2+2G^2R/(1-\gamma)^3$.
}

Similarly, we can bound the norm of gradient estimator as follows
\begin{align*}
    \|g(\tau|\btheta)\|_2&\leq\bigg\|\sum_{h=0}^{H-1}\nabla_{\btheta}\log \pi_{\btheta}(a_h|s_h)\frac{\gamma^hR(1-\gamma^{H-h})}{1-\gamma}\bigg\|_2\leq\frac{\consLip R}{(1-\gamma)^2},
\end{align*}
which completes the proof.
\end{proof}

\begin{lemma}[Lemma 1 in \citet{cortes2010learning}]\label{lemma:importance_weight_property}
Let $\omega(x)=P(x)/Q(x)$ be the importance weight for distributions $P$ and $Q$. Then  $\EE[\omega]=1,\EE[\omega^2]=d_2(P||Q)$, where $d_{2}(P||Q)=2^{D_{2}(P||Q)}$ and $D_{2}(P||Q)$ is the R\'{e}nyi divergence between distributions $P$ and $Q$. Note that this immediately implies $\Var(\omega)=d_2(P||Q)-1$.
\end{lemma}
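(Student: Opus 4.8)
The plan is to treat this as a direct moment computation, where the only subtlety is being explicit that the expectation defining the importance weight is taken with respect to the \emph{sampling} distribution $Q$ (the denominator), not $P$. With that convention fixed, I would first compute the first moment: since $\omega(x)=P(x)/Q(x)$, the factor $Q(x)$ in the expectation cancels the denominator, leaving
\begin{align*}
    \EE_{x\sim Q}[\omega(x)]=\int_x Q(x)\frac{P(x)}{Q(x)}\dd x=\int_x P(x)\dd x=1,
\end{align*}
using only that $P$ is a probability density. This gives $\EE[\omega]=1$.

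Next I would compute the second moment in the same style, writing
\begin{align*}
    \EE_{x\sim Q}[\omega(x)^2]=\int_x Q(x)\bigg(\frac{P(x)}{Q(x)}\bigg)^2\dd x=\int_x \frac{P(x)^2}{Q(x)}\dd x,
\end{align*}
and then match this integral to the Rényi-divergence definition stated in the Notation paragraph. Specializing $D_{\alpha}(P||Q)=\frac{1}{\alpha-1}\log_2\int_x P(x)(P(x)/Q(x))^{\alpha-1}\dd x$ to $\alpha=2$ gives $D_2(P||Q)=\log_2\int_x P(x)^2/Q(x)\,\dd x$, so that $d_2(P||Q)=2^{D_2(P||Q)}=\int_x P(x)^2/Q(x)\,\dd x$. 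Combining the two displays yields $\EE[\omega^2]=d_2(P||Q)$.

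Finally, the variance claim follows immediately by plugging the two moments into $\Var(\omega)=\EE[\omega^2]-(\EE[\omega])^2=d_2(P||Q)-1$. There is essentially no hard step here: the result is a one-line consequence of cancellation plus the definition of the exponentiated Rényi divergence. The only point requiring care—and the closest thing to an obstacle—is bookkeeping the reference measure correctly (everything must be integrated against $Q$) and verifying that the $\alpha=2$ instance of the paper's $\log_2$/$2^{(\cdot)}$ convention reproduces exactly the integral $\int P^2/Q\,\dd x$; once these conventions are aligned, the equalities are exact rather than merely asymptotic.
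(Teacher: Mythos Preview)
Your proof is correct and is exactly the standard two-line computation: expectation under $Q$ cancels the denominator to give $\EE[\omega]=1$, the second moment reduces to $\int P^2/Q$, which is identified with $d_2(P\|Q)$ via the $\alpha=2$ case of the R\'{e}nyi definition, and the variance follows. The paper does not supply its own proof of this lemma---it is quoted verbatim as Lemma~1 of \citet{cortes2010learning} and used as a black box in the proof of Lemma~\ref{lemma:importance_sampling_variance}---so there is nothing further to compare against.
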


\begin{proof}[Proof of Lemma \ref{lemma:importance_sampling_variance}]
According to the property of importance weight in Lemma \ref{lemma:importance_weight_property}, we know
\begin{align*}
    \Var\big(\omega_{0:h}\big(\tau|\tbtheta^{\Oind},\btheta_{\Iind}^{\Oind+1}\big)\big)=d_2\big(p(\tau_h|\tbtheta^{\Oind})||p(\tau_h|\btheta_{\Iind}^{\Oind+1})\big)-1.
\end{align*}
To simplify the presentation, we denote $\btheta_1=\tbtheta^{\Oind}$ and $\btheta_2=\btheta_{\Iind}^{\Oind+1}$ in the rest of this proof. By definition, we have
\begin{align*}
    d_2(p(\tau_h|\btheta_1)||p(\tau_h|\btheta_2))&=\int_{\tau}p(\tau_h|\btheta_1)\frac{p(\tau_h|\btheta_1)}{p(\tau_h|\btheta_2)}\dd \tau=\int_{\tau}p(\tau_h|\btheta_1)^2p(\tau_h|\btheta_2)^{-1}\dd\tau.
\end{align*}
Taking the gradient of $d_2(p(\tau_h|\btheta_1)||p(\tau_h|\btheta_2))$ with respect to $\btheta_1$, we have
\begin{align*}
    \nabla_{\btheta_1}d_2(p(\tau_h|\btheta_1)||p(\tau_h|\btheta_2))=2\int_{\tau}p(\tau_h|\btheta_1)\nabla_{\btheta_1}p(\tau_h|\btheta_1)p(\tau_h|\btheta_2)^{-1}\dd\tau.
\end{align*}
In particular, if we set the value of $\btheta_1$ to be $\btheta_1=\btheta_2$ in the above formula of the gradient, we get
\begin{align*}
   &\nabla_{\btheta_1}d_2(p(\tau_h|\btheta_1)||p(\tau_h|\btheta_2))\big|_{\btheta_1=\btheta_2}=2\int_{\tau}\nabla_{\btheta_1}p(\tau_h|\btheta_1)\dd\tau\big|_{\btheta_1=\btheta_2}=0.
\end{align*}
Applying mean value theorem with respect to the variable $\btheta_1$, we have
\begin{align}\label{eq:variance_taylor_expansion}
    d_2(p(\tau_h|\btheta_1)||p(\tau_h|\btheta_2))=1+1/2(\btheta_1-\btheta_2)^{\top}\nabla_{\btheta}^2d_2(p(\tau_h|\btheta)||p(\tau_h|\btheta_2))(\btheta_1-\btheta_2),
\end{align}
where $\btheta=t\btheta_1+(1-t)\btheta_2$ for some $t\in[0,1]$ and we used the fact that $d_2(p(\tau_h|\btheta_2)||p(\tau_h|\btheta_2))=1$. To bound the above exponentiated R\'{e}nyi divergence, we need to compute the Hessian matrix. Taking the derivative of $\nabla_{\btheta_1}d_2(p(\tau_h|\btheta_1)||p(\tau_h|\btheta_2))$ with respect to $\btheta_1$ further yields
\begin{align}\label{eq:variance_hes_renyiDiv}
    \nabla_{\btheta}^2d_2(p(\tau_h|\btheta)||p(\tau_h|\btheta_2))
    &=2\int_{\tau}\nabla_{\btheta}\log p(\tau_h|\btheta)\nabla_{\btheta}\log p(\tau_h|\btheta)^{\top}\frac{p(\tau_h|\btheta)^2}{p(\tau_h|\btheta_2)}\dd\tau\notag\\
    &\qquad+2\int_{\tau}\nabla^2_{\btheta}p(\tau_h|\btheta)p(\tau_h|\btheta)p(\tau_h|\btheta_2)^{-1}\dd\tau.
\end{align}
Thus we need to compute the Hessian matrix of the trajectory distribution function, i.e., $\nabla_{\btheta}^2p(\tau_h|\btheta)$, which can further be derived from the Hessian matrix of the log-density function.
\begin{align}\label{eq:variance_hes_logDen}
    \nabla_{\btheta}^2\log p(\tau_h|\btheta)&=-p(\tau_h|\btheta)^{-2}\nabla_{\btheta}p(\tau_h|\btheta)\nabla_{\btheta}p(\tau_h|\btheta)^{\top}+p(\tau_h|\btheta)^{-1}\nabla_{\btheta}^2p(\tau_h|\btheta).
\end{align}
Submitting \eqref{eq:variance_hes_logDen} into \eqref{eq:variance_hes_renyiDiv} yields
\begin{align*}
    \|\nabla_{\btheta}^2d_2(p(\tau_h|\btheta)||p(\tau_h|\btheta_2))\|_2
    &=\bigg\|4\int_{\tau}\nabla_{\btheta}\log p(\tau_h|\btheta)\nabla_{\btheta}\log p(\tau_h|\btheta)^{\top}\frac{p(\tau_h|\btheta)^2}{p(\tau_h|\btheta_2)}\dd\tau\\
    &\qquad+2\int_{\tau}\nabla^2_{\btheta}\log p(\tau_h|\btheta)\frac{p(\tau_h|\btheta)^2}{p(\tau_h|\btheta_2)}\dd\tau\bigg\|_2\\
    &\leq \int_{\tau}\frac{p(\tau_h|\btheta)^2}{p(\tau_h|\btheta_2)}\big(4\|\nabla_{\btheta}\log p(\tau_h|\btheta)\|_2^2+2\|\nabla^2_{\btheta}\log p(\tau_h|\btheta)\|_2\big)\dd\tau\\
    &\leq(4h^2\consLip^2+2h\consHes)\EE[\omega(\tau|\btheta,\btheta_2)^2]\\
    &\leq2h(2h\consLip^2+\consHes)(\consVarWeight+1),
\end{align*}
where the second inequality comes from Assumption \ref{assump:smooth} and the last inequality is due to Assumption \ref{assump:weight_variance} and Lemma \ref{lemma:importance_weight_property}. Combining the above result with \eqref{eq:variance_taylor_expansion}, we have
\begin{align*}
    \Var\big(\omega_{0:h}\big(\tau|\tbtheta^{\Oind},\btheta_{\Iind}^{\Oind+1}\big)\big)&=d_2\big(p(\tau_h|\tbtheta^{\Oind})||p(\tau_h|\btheta_{\Iind}^{\Oind+1})\big)-1\leq C_{\omega}\|\tbtheta^{\Oind}-\btheta_{\Iind}^{\Oind+1}\|_2^2,
\end{align*}
where $C_{\omega}=h(2h\consLip^2+\consHes)(\consVarWeight+1)$.
\end{proof}

\section{Proof of Theoretical Results for Gaussian Policy}\label{sec:append_gaussian}
In this section, we prove the sample complexity for Gaussian policy. According to \eqref{eq:def_GaussianPolicy}, we can calculate the gradient and Hessian matrix of the logarithm of the policy.
\begin{align}\label{eq:smooth_for_gaussian}
    \nabla \log\pi_{\btheta}(a|s)=\frac{(a-\btheta^{\top}\phi(s))\phi(s)}{\sigma^2}, \quad\nabla^2\log\pi_{\btheta}(a|s)=-\frac{\phi(s)\phi(s)^{\top}}{\sigma^2}.
\end{align}
It is easy to see that Assumption \ref{assump:smooth} holds with $\consLip=C_{a}M_{\phi}/\sigma^2$ and $\consHes=M_{\phi}^2/\sigma^2$. Based on this observation, Proposition \ref{prop:smooth_obj} also holds for Gaussian policy with parameters defined as follows
\begin{align}
    \consGradLip=\frac{RM_{\phi}^2}{\sigma^2(1-\gamma)^{\CC{3}}},\quad\text{and }\quad \consGrad=\frac{RC_a M_{\phi}}{\sigma^2(1-\gamma)^{2}}.
\end{align}
The following lemma gives the variance $\consVarBound^2$ of the PGT estimator, which verifies Assumption \ref{assump:bounded_variance}.
\begin{lemma}[Lemma 5.5 in \cite{pirotta2013adaptive}]\label{lemma:variance_grad_gaussian}
Given a Gaussian policy $\pi_{\btheta}(a|s)\sim N(\btheta^{\top}\phi(s),\sigma^2)$, if the $|r(s,a)|\leq R$ and $\|\phi(s)\|_2\leq M_{\phi}$ for all $s\in\cS,a\in\cA$ and $R,M_{\phi}>0$ are constants, then the variance of PGT estimator defined in \eqref{eq:def_PGT} can be bounded as follows:
\begin{align*}
    \Var(g(\tau|\btheta))\leq\consVarBound^2=\frac{R^2M_{\phi}^2}{(1-\gamma)^2\sigma^2}\bigg(\frac{1-\gamma^{2H}}{1-\gamma^2}-H\gamma^{2H}-2\gamma^H\frac{1-\gamma^H}{1-\gamma}\bigg).
\end{align*}
\end{lemma}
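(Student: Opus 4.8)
The plan is to control the variance via the (uncentered) second moment, $\Var\big(g(\tau|\btheta)\big)\le\EE_{\tau}\big[\|g(\tau|\btheta)\|_2^2\big]$, and then to exploit the explicit Gaussian score together with the geometric decay of the discounted tail reward.

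First I would specialize the PGT estimator in \eqref{eq:def_PGT} (with zero baseline) to the Gaussian policy. By \eqref{eq:smooth_for_gaussian} we have $\nabla_{\btheta}\log\pi_{\btheta}(a_h|s_h)=(a_h-\btheta^{\top}\phi(s_h))\phi(s_h)/\sigma^2$; writing $z_h:=a_h-\btheta^{\top}\phi(s_h)$ and $\Gamma_h:=\sum_{t=h}^{H-1}\gamma^t r(s_t,a_t)$, the estimator is $g(\tau|\btheta)=\sigma^{-2}\sum_{h=0}^{H-1}z_h\phi(s_h)\Gamma_h$. Two facts about $z_h$ are used: conditionally on the trajectory up to and including $s_h$, $z_h\sim\mathcal{N}(0,\sigma^2)$, so $\EE[z_h^2\,|\,s_h]=\sigma^2$ and $\EE[|z_h|\,|\,s_h]=\sqrt{2/\pi}\,\sigma$ are deterministic; and $\|\phi(s_h)\|_2\le M_{\phi}$ together with $|r|\le R$ give the deterministic tail bound $|\Gamma_h|\le R\sum_{t=h}^{H-1}\gamma^t=R(\gamma^h-\gamma^H)/(1-\gamma)$.

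Next I would expand the squared norm. By the triangle inequality, $\|g(\tau|\btheta)\|_2\le\sigma^{-2}\sum_h|z_h|\,\|\phi(s_h)\|_2\,|\Gamma_h|\le \frac{R M_{\phi}}{\sigma^2(1-\gamma)}\sum_{h=0}^{H-1}|z_h|(\gamma^h-\gamma^H)$, hence
\begin{align*}
  \EE\big[\|g(\tau|\btheta)\|_2^2\big]\le\frac{R^2M_{\phi}^2}{\sigma^4(1-\gamma)^2}\sum_{h=0}^{H-1}\sum_{h'=0}^{H-1}(\gamma^h-\gamma^H)(\gamma^{h'}-\gamma^H)\,\EE\big[|z_h|\,|z_{h'}|\big].
\end{align*}
The diagonal terms are $\EE[z_h^2]=\sigma^2$ by conditioning on $s_h$; for $h\neq h'$, conditioning on the trajectory up to the later index and using that $\EE[|z_{h'}|\,|\,\cdot]$ is the constant $\sqrt{2/\pi}\,\sigma$ gives $\EE[|z_h|\,|z_{h'}|]=(2/\pi)\sigma^2$ (the crude bound $\EE[|z_h|\,|z_{h'}|]\le\sigma^2$ via Cauchy--Schwarz already suffices for the correct order). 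Substituting the closed forms $\sum_{h=0}^{H-1}\gamma^{2h}=(1-\gamma^{2H})/(1-\gamma^2)$ and $\sum_{h=0}^{H-1}\gamma^h=(1-\gamma^H)/(1-\gamma)$ and collecting the resulting geometric series then yields a bound of the stated form, i.e.\ $\consVarBound^2$. Equivalently, this is Lemma~5.5 of \citet{pirotta2013adaptive}, which may simply be invoked.

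The step I expect to be the main obstacle is the off-diagonal accounting: there are $\Theta(H^2)$ cross terms, and a naive treatment lets them accumulate an extra factor of $H$ relative to the diagonal. Avoiding this requires using the conditional-independence structure of the Gaussian noise — that $\EE[|z_{h'}|\,|\,\text{past}]$ and $\EE[z_{h'}^2\,|\,\text{past}]$ do not depend on the earlier trajectory — so that the cross contributions stay at the same order as the diagonal, after which one must track the discount factors through the finite geometric sums carefully to land on the precise constant $\consVarBound^2$.
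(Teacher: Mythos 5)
You should first note that the paper does not actually prove this statement: Lemma \ref{lemma:variance_grad_gaussian} is imported verbatim as Lemma 5.5 of \citet{pirotta2013adaptive} and used as a black box to verify Assumption \ref{assump:bounded_variance} for Gaussian policies. So your fallback of ``simply invoking'' the cited lemma coincides exactly with what the paper does. The derivation you sketch as an alternative, however, has a genuine gap at precisely the step you flag as the main obstacle, and the fix you propose does not close it.

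The problem is the off-diagonal accounting after you pass to absolute values. Once you bound $\|g(\tau|\btheta)\|_2\le\frac{RM_{\phi}}{\sigma^2(1-\gamma)}\sum_{h}|z_h|(\gamma^h-\gamma^H)$, the cross expectations $\EE[|z_h|\,|z_{h'}|]$ for $h\neq h'$ are, as you yourself compute, equal to $(2/\pi)\sigma^2$ --- a \emph{nonzero} constant. Constancy of $\EE[|z_{h'}|\mid\text{past}]=\sqrt{2/\pi}\,\sigma$ lets you factor the cross term but does not make it small: the $\Theta(H^2)$ off-diagonal terms sum to $\tfrac{2}{\pi}\sigma^2\big[\big(\sum_h(\gamma^h-\gamma^H)\big)^2-\sum_h(\gamma^h-\gamma^H)^2\big]$, which for $\gamma$ near $1$ is of order $\sigma^2/(1-\gamma)^2$ and strictly dominates the diagonal contribution $\sigma^2\sum_h(\gamma^h-\gamma^H)^2=O\big(\sigma^2/(1-\gamma)\big)$. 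Your route therefore only yields $\Var(g(\tau|\btheta))=O\big(R^2M_{\phi}^2/(\sigma^2(1-\gamma)^4)\big)$, a full factor $1/(1-\gamma)$ (equivalently $H$) worse than the claimed $\consVarBound^2=O\big(R^2M_{\phi}^2/(\sigma^2(1-\gamma)^3)\big)$; propagated into Corollary \ref{coro:gradient_complexity_gaussian} this would degrade $N$ and the final sample complexity by the same factor. Killing the cross terms requires keeping the signs and using $\EE[z_{h'}\mid\text{past}]=0$, which the triangle-inequality step irrevocably discards; and even for the signed products the martingale cancellation is obstructed because both $\Gamma_h$ and $\Gamma_{h'}$ contain rewards at times $t\ge h'$ and hence depend on $z_{h'}$, so the cited proof cannot be reproduced by a termwise absolute-value bound on $\EE\|g\|_2^2$. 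As a minor separate point, the diagonal-only sum equals $\frac{1-\gamma^{2H}}{1-\gamma^2}+H\gamma^{2H}-2\gamma^H\frac{1-\gamma^H}{1-\gamma}$, i.e., it matches the displayed constant only up to the sign of $H\gamma^{2H}$ (as written the displayed bound is negative for, e.g., $H=1$ and $\gamma=1/2$), so ``collecting the geometric series'' would not literally land on the stated expression either.
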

\begin{proof}[Proof of Corollary \ref{coro:gradient_complexity_gaussian}]
The proof will be similar to that of Corollary \ref{coro:gradient_complexity}. By Theorem \ref{thm:convergence_svrpg}, to ensure that $\EE[\|\nabla J(\btheta_{\text{out}})\|_2^2]\leq\epsilon$, we can set
\begin{align*}
    \frac{8(J(\btheta^{*})-J(\btheta_{0}))}{\eta\OindLen\IindLen}=\frac{\epsilon}{2},\quad\frac{6\consVarBound^2}{N}=\frac{\epsilon}{2}.
\end{align*}
Plugging the value of $\xi^2$ in Lemma \ref{lemma:variance_grad_gaussian} into the second equation above yields $N=O(\epsilon^{-1}(1-\gamma)^{-3})$. For the first equation, we have $S=O(1/(\eta m\epsilon))$. Therefore, the total number of stochastic gradient evaluations $\cT_g$ required by Algorithm \ref{alg:srvr_pg} is
\begin{align*}
    \cT_g=\OindLen N+\OindLen\IindLen B=O\bigg(\frac{N}{\eta m\epsilon}+\frac{B}{\eta\epsilon}\bigg).
\end{align*}
So a good choice of batch size $B$ and epoch length $m$ will lead to $Bm=N$. Combining this with the requirement of $B$ in Theorem \ref{thm:convergence_svrpg}, we can set 
\begin{align*}
    m=\sqrt{\frac{LN}{\eta C_{\gamma}}},\quad\text{and } B=\sqrt{\frac{N\eta C_{\gamma}}{L}}.
\end{align*}
Note that $C_{\gamma}=24RG^2(2G^2+M)(W+1)\gamma/(1-\gamma)^5$. Plugging the values of $G,N$ and L into the above equations yields
\begin{align*}
    m=O\bigg(\frac{1}{(1-\gamma)^{\CC{2}}\sqrt{\epsilon}}\bigg),\quad B=O\bigg(\frac{1}{(1-\gamma)^{\CC{1}}\sqrt{\epsilon}}\bigg).
\end{align*}
The corresponding sample complexity is 
\begin{align*}
    \cT_g=O\bigg(\frac{1}{(1-\gamma)^{4}\epsilon^{3/2}}\bigg).
\end{align*}
This completes the proof for Gaussian policy.
\end{proof}

\section{Additional details on experiments}\label{sec:additional_exp_detail}
Now, we provide more details of our experiments presented in Section \ref{sec:experiments}. We first present the parameters for all algorithms we used in all our experiments in Tables \ref{table:parameters} and \ref{table:parameters-pgpe}. Among the parameters, the neural network structure and the RL environment parameters are shared across all the algorithms. {As mentioned in Section \ref{sec:experiments}}, the order of the batch size parameters of our algorithm are chosen according to Corollary 4.7 and we multiply them by a tuning constant via grid search. Similarly, the orders of batch size parameters of SVRPG and GPOMDP are chosen based on the theoretical results suggested by \citet{papini2018stochastic,xu2019improved}. Moerover, the learning rates for different methods are tuned by grid search.

We then present the results of PGPE and $\spiderAlg$-PE on Cartpole, Mountain Car and Pendulum in Figure \ref{fig:pgpe_cart_pen}. In all three environments, our $\spiderAlg$-PE algorithm shows improvement over PGPE \citep{sehnke2010parameter} in terms of number of trajectories. It is worth noting that in all these environments both PGPE and $\spiderAlg$-PE seem to solve the problem very quickly, which is consistent with the results reported in \citep{zhao2011analysis,zhao2013efficient,metelli2018policy}. Our primary goal in this experiment is to show that our proposed variance reduced policy gradient algorithm can be easily extended to the PGPE framework. To avoid distracting the audience's attention from the variance reduction algorithm on the sample complexity, we do not thoroughly compare the performance of the parameter based policy gradient methods such as PGPE and $\spiderAlg$-PE with the action based policy gradient methods. We refer interested readers to the valuable empirical studies of PGPE based algorithms presented in \cite{zhao2011analysis,zhao2013efficient,metelli2018policy}.

\begin{table}[h]
\vskip 1em
    \centering
    \caption{Parameters used in the SRVR-PG experiments.}
    \begin{tabular}{lcccc}
    \toprule
        Parameters & Algorithm &Cartpole & Mountain Car& Pendulum  \\
         \midrule
        NN size & - & 64 & 64 & 8$\times$8\\
        NN activation function & - & Tanh & Tanh & Tanh \\
        Task horizon & - & 100 & 1000 & 200 \\
        Total trajectories & - & 2500 & 3000 & $2 \times 10^5$\\
        \cmidrule{2-5}
        \multirow{3}{*}{Discount factor $\gamma$}&GPOMDP & 0.99 & 0.999 & 0.99 \\
        &SVRPG& 0.999 & 0.999 & 0.995\\
        &$\spiderAlg$ & 0.995 & 0.999 & 0.995 \\       
        \cmidrule{2-5}
        \multirow{3}{*}{Learning rate $\eta$}&GPOMDP& 0.005 & 0.005 & 0.01 \\
        &SVRPG& 0.0075 & 0.0025 &0.01 \\
        &$\spiderAlg$ & 0.005 & 0.0025 &0.01\\    
        \cmidrule{2-5}
        \multirow{3}{*}{Batch size $N$}&GPOMDP& 10 & 10 & 250 \\
        &SVRPG& 25 & 10 & 250 \\
        &$\spiderAlg$ & 25 & 10 & 250\\ 
        \cmidrule{2-5}
        \multirow{3}{*}{Batch size $B$}&GPOMDP& - & - & - \\
        &SVRPG& 10 & 5 & 50 \\
        &$\spiderAlg$ & 5 & 3 & 50\\       
        \cmidrule{2-5}
        \multirow{3}{*}{Epoch size $m$}&GPOMDP& - & - & - \\
        &SVRPG& 3 & 2 & 1 \\
        &$\spiderAlg$ & 3 & 2 & 1 \\    
    \bottomrule
    \end{tabular}
    \label{table:parameters}
\end{table}

\begin{table}[h]
    \centering
    \vskip 1em
    \caption{Parameters used in the SRVR-PG-PE experiments.}
    \begin{tabular}{lccc}
    \toprule
        Parameters & Cartpole & Mountain Car & Pendulum  \\
         \midrule
        NN size & - & 64 & 8$\times$8 \\
        NN activation function & Tanh & Tanh & Tanh \\
        Task horizon  & 100 & 1000 & 200 \\
        Total trajectories & 2000 & 500 & 1750 \\
        Discount factor $\gamma$ & 0.99 & 0.999 & 0.99 \\
        Learning rate $\eta$ & 0.01 & 0.0075 & 0.01 \\
        Batch size $N$& 10 & 5 & 50\\
        Batch size $B$& 5 & 3 & 10\\
        Epoch size $m$& 2 & 1 & 2 \\
    \bottomrule
    \end{tabular}
    \label{table:parameters-pgpe}
\end{table}

\begin{figure}[htbp!]
    \centering
    \subfigure[Cartpole]{
    \includegraphics[scale=0.245]{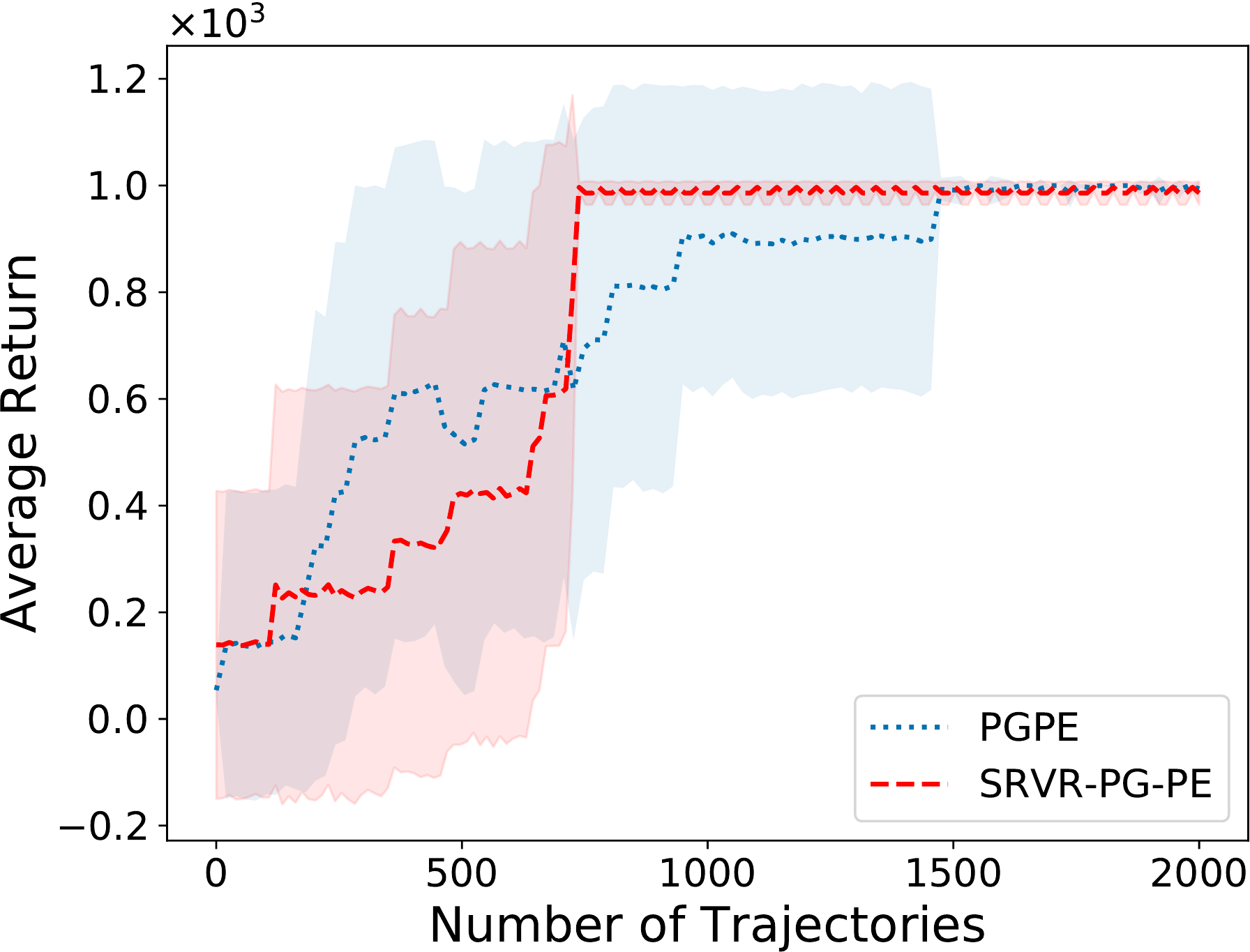}\label{fig:carpole_pgpe}}
    \subfigure[Mountain Car]{
    \includegraphics[scale=0.245]{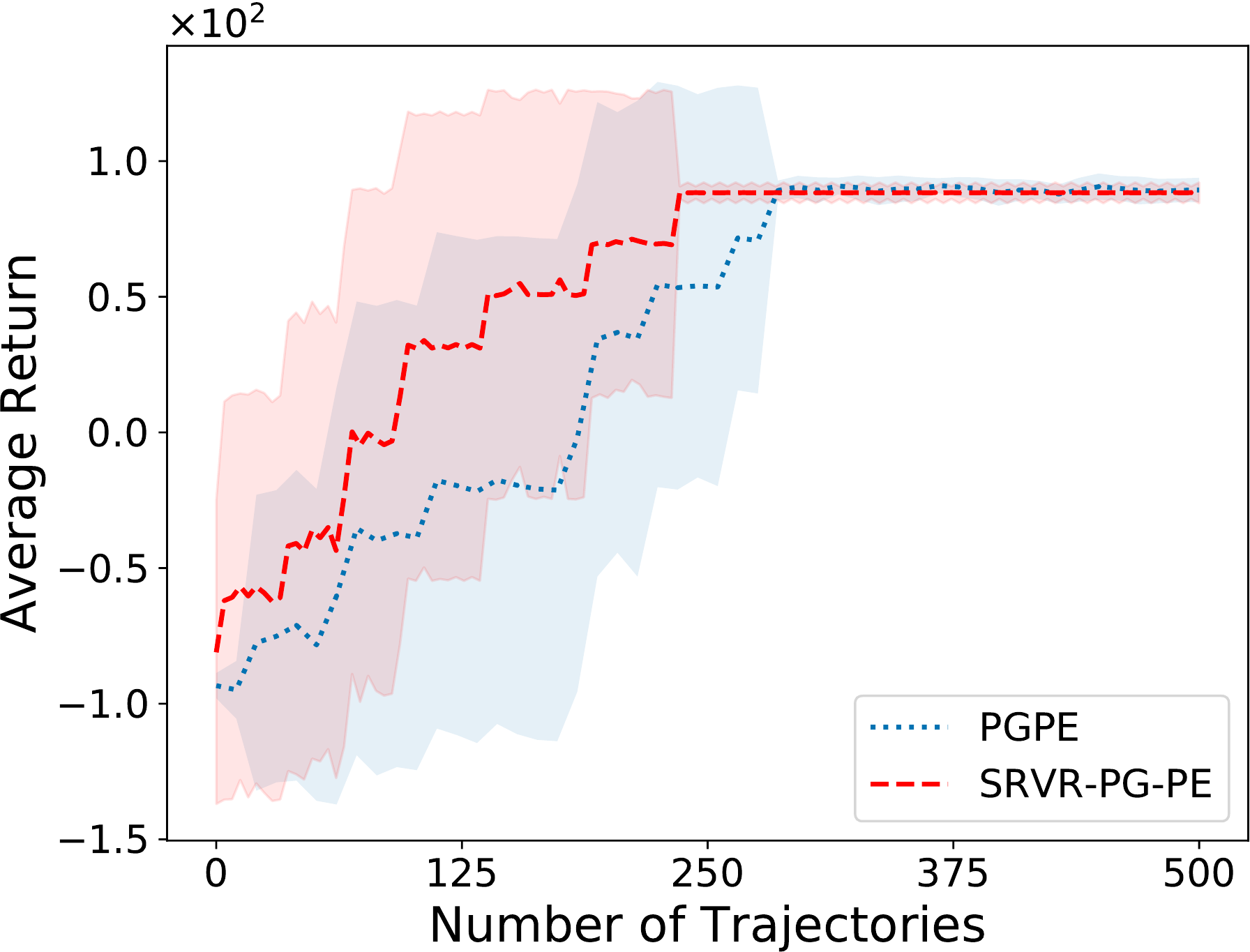}\label{fig:mc_pgpe}}
    \subfigure[Pendulum]{
    \includegraphics[scale=0.245]{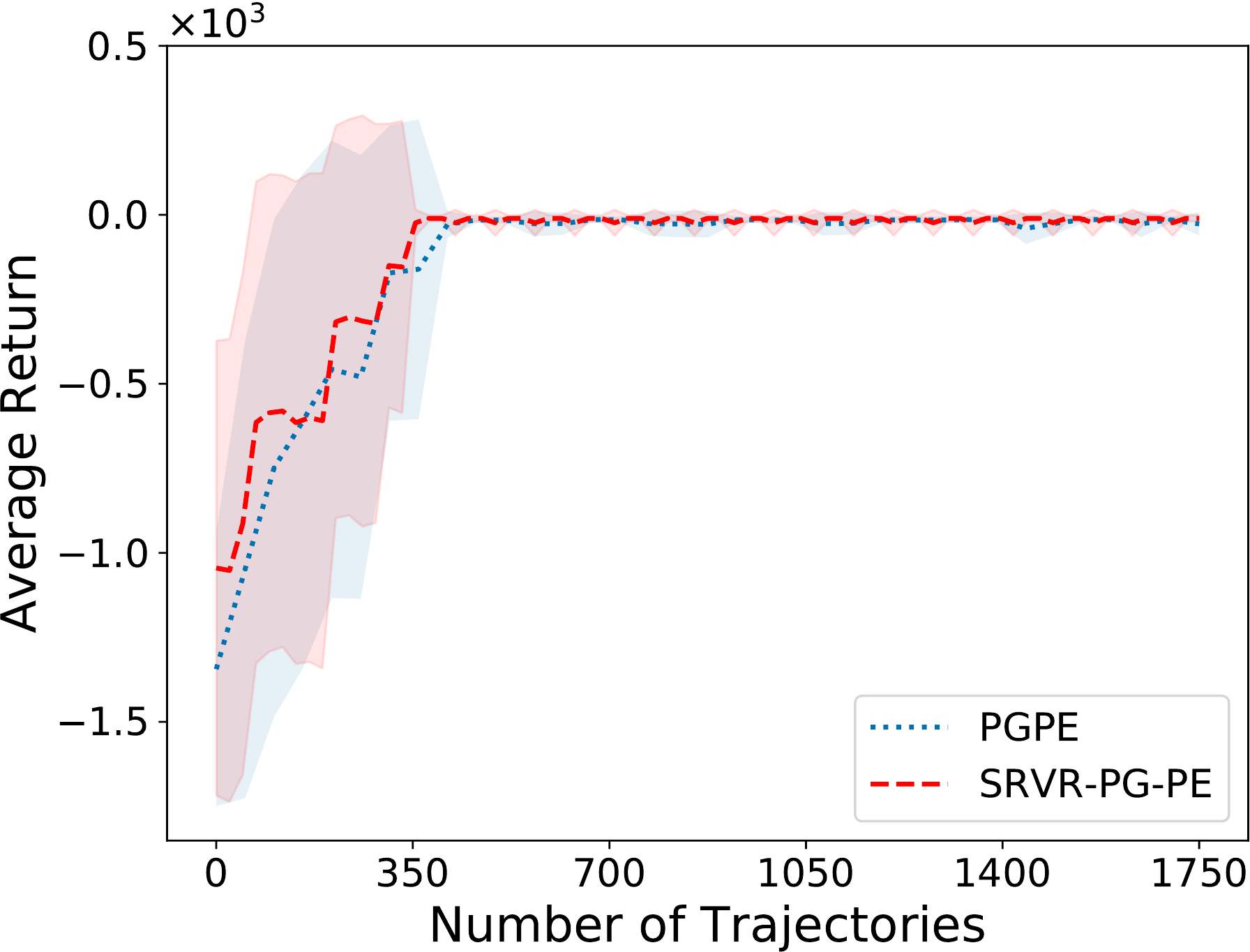}\label{fig:pen_pgpe}}      
    \caption{Performance of $\spiderAlg$-PE compared with PGPE. Experiment results are averaged over $10$ runs.}
    \label{fig:pgpe_cart_pen}
\end{figure}

\end{document}